\newtheorem{theorem}{Theorem}[section]
\newtheorem{corollary}{Corollary}[theorem]
\newtheorem{lemma}[theorem]{Lemma}
\newtheorem{definition}{Definition}[theorem]
\newtheorem{assumption}{Assumption}
\newtheorem{proposition}{Proposition}
\newenvironment{hproof}{%
	\proof}{\endproof}
\begin{document}
%
\title{K-medoids Clustering of Data Sequences with Composite Distributions}
%
%
%

\author{Tiexing~Wang,
		Qunwei Li,~\IEEEmembership{Student Member,~IEEE},
		Donald J. Bucci,
		Yingbin Liang,~\IEEEmembership{Senior Member,~IEEE},\\
		Biao Chen,~\IEEEmembership{Fellow,~IEEE},
		Pramod K. Varshney,~\IEEEmembership{Life Fellow,~IEEE}
\thanks{Some parts of the preliminary versions of this manuscript were published in Proc.\ ICASSP 2018 and Proc.\ CISS 2018.}
\thanks{T. Wang, Q. Li, B. Chen and P. K. Varshney are with the department
of Electrical Engineering and Computer Science, Syracuse University, Syracuse, NY, 13244, USA (e-mail: \{twang17, qli33, bichen, varshney\}@syr.edu).}
\thanks{D. J. Bucci is with Lockheed Martin - Advanced Technology Labs, Cherry Hill, NJ, 08002, USA (e-mail: Donald.J.Bucci.Jr@lmco.com).}
\thanks{Y. Liang is with the department of Electrical and Computer Engineering, The Ohio State University, Columbus, OH 43210, USA (e-mail: liang.889@osu.edu).}
\thanks{This material is based upon work supported in part by the Defense Advanced Research Projects Agency under Contract No. HR0011-16-C-0135 and by the dynamic data driven Applications Systems (DDDAS) program of AFOSR under grant number FA9550-16-1-0077. The work of Y. Liang was also supported in part by the National Science Foundation under grant CCF-1801855. The work of B. Chen was also supported in part by the National Science Foundation under grant CNS-1731237.}
}

\maketitle

\begin{abstract}
This paper studies clustering of data sequences using the k-medoids algorithm. All the data sequences are assumed to be generated from \emph{unknown} continuous distributions, which form clusters with each cluster containing a composite set of closely located distributions (based on a certain distance metric between distributions). The maximum intra-cluster distance is assumed to be smaller than the minimum inter-cluster distance, and both values are assumed to be known. The goal is to group the data sequences together if their underlying generative distributions (which are unknown) belong to one cluster. Distribution distance metrics based k-medoids algorithms are proposed for known and unknown number of distribution clusters. Upper bounds on the error probability and convergence results in the large sample regime are also provided. It is shown that the error probability decays exponentially fast as the number of samples in each data sequence goes to infinity. The error exponent has a simple form regardless of the distance metric applied when certain conditions are satisfied. In particular, the error exponent is characterized when either the Kolmogrov-Smirnov distance or the maximum mean discrepancy are used as the distance metric. Simulation results are provided to validate the analysis.
\end{abstract}


\begin{IEEEkeywords}
Kolmogorov-Smirnov distance, maximum mean discrepancy, unsupervised learning, error probability, k-medoids clustering, composite distributions.
\end{IEEEkeywords}

%
\IEEEpeerreviewmaketitle

\section{Introduction}
%
%
%
%
\IEEEPARstart{T}{HIS} paper aims to cluster sequences generated by \emph{unknown} continuous distributions into classes so that each class contains all the sequences generated from the same (composite) distribution cluster. By sequence, we mean a set of feature observations generated by an underlying probability distribution. Here each distribution cluster contains a set of distributions
that are close to each other, whereas different clusters are assumed to be far away from each other based on a certain distance metric between distributions. To be more concrete, we assume that the maximum intra-cluster distance (or its upper bound) is smaller than the minimum inter-cluster distance (or its lower bound). This assumption is necessary for the clustering problem to be meaningful. The distance metrics used to characterize the difference of data sequences and corresponding distributions are assumed to have certain properties summarized in Assumption \ref{assump} in Section \ref{sec:sys_model}.

Such unsupervised learning problems have been widely studied  \cite{Bishop06,Barber12}. The problem is typically solved by applying clustering methods, e.g., k-means clustering \cite{Lloyd1982,Stei2000,Kanu2002} and k-medoids clustering\cite{kaufman1987,Laan2003,park2009}, where the data sequences are viewed as multivariate data with Euclidean distance as the distance metric. These clustering algorithms usually require the knowledge of the number of clusters and they differ in how the initial centers are determined. One reasonable way is to choose a data sequence as a center if it has the largest minimum distances to all the existing centers \cite{Katsavounidis1994,Wang:ICASSP2018,Wang:CISS2018}. Alternatively, all the initial centers can be randomly chosen \cite{MORENOSAEZ2014}. With the number of clusters unknown, there are typically two alternative approaches for clustering. One starts with a small number of clusters, e.g., $1$, which is an underestimate of the true number, and proceed to split the existing clusters until convergence \cite{LOPEZ2015,Wang:CISS2018}. The authors in \cite{LOPEZ2015} assumed a maximum number of clusters and the threshold for clustering depended on a pre-determined significance level of the two sample Kolmogorov-Smirnov (KS) test whereas the algorithm proposed in \cite{Wang:CISS2018} did not assume a maximum number of clusters and the threshold for clustering was a function of the intra-cluster and inter-cluster distances. Alternatively, one may start with an overestimated the number of clusters, e.g., every sequence is treated as a cluster, and proceed to merge clusters that are deemed close to each other \cite{Wang:CISS2018}. The algorithms in \cite{Katsavounidis1994,MORENOSAEZ2014,LOPEZ2015} were all validated by simulation results without carrying out an analysis of the error probability.

There are some key differences between the k-means algorithm and the k-medoids algorithm. The k-means algorithm minimizes a sum of squared Euclidean distances. Meanwhile, the k-medoids algorithm assigns data sequences as centers and minimizes a sum of arbitrary distances, which makes it more robust to outliers and noise \cite{Ng2002,kaufman2009}. Moreover, the k-means algorithm requires updating the distances between data sequences and the corresponding centroids in every iteration whereas the k-medoids algorithm only requires the pairwise distances of the data sequences, which can be computed before hand. Thus, the k-medoids algorithm outperforms the k-means algorithm in terms of computational complexity as the number of sequences increases \cite{velmurugan2010}. 

Most prior research focused on computational complexity analysis, whereas the error probability and the performance comparison of different clustering algorithms were typically studied through simulations, e.g., \cite{Laan2003,Sheng2006,park2009,velmurugan2010}. This paper attempts to theoretically analyze the error probability for the k-medoids algorithm especially in the asymptotic region. Furthermore, in contrast to previous studies, which frequently used vector norms as the distance metric, e.g., Euclidean distance, our study adopts the distance metrics between distributions for clustering in order to capture the statistical models of data sequences considered in this paper. This formulation based on a distributional distance metric is uniquely suited to the proposed clustering problem, where each data point, i.e., each data sequence, represents a probability distribution and each cluster is a collection of closely related distributions, i.e., composite hypotheses.

Various distance metrics that take the distribution properties into account can be reasonable choices, e.g., KS distance \cite{Wang:CISS2018,Wang:ICASSP2018,LOPEZ2015,MORENOSAEZ2014} and maximum mean discrepancy (MMD) \cite{Gangeh2014}. 
Our previous work \cite{Wang:CISS2018,Wang:ICASSP2018} has shown that the KS distance based k-medoids algorithms are exponentially consistent for both known and unknown number of clusters. In this paper, we consider a much more general framework and instead of focusing on a specific distance metric such as KS distance in our prior work \cite{Wang:CISS2018,Wang:ICASSP2018}, we consider any distance metric that satisfies Assumption \ref{assump} for clustering. The rationale is the following: with any distance metric that captures the statistical model of data sequences, one is likely to observe that for a large sample size, 1) sequences generated from the same distribution cluster are close to each other, and 2) sequences generated from different distribution clusters are far away from each other. Thus, if the minimum distance of different distribution clusters is greater than the maximum diameter of all the distribution clusters defined in Section \ref{sec:sys_model}, then it becomes unlikely to make a clustering error as the sample size increases. In this paper, we develop k-mediods distribution clustering algorithms where the distances between distributions are selected to capture the underlying statistical model of the data. We analyze the error probability of the proposed algorithms, which takes the form of exponential decay as the sample size increases under Assumption \ref{assump}.
Furthermore, beyond the KS distance, we also consider another distance metric namely MMD and show that they both satisfy Assumption \ref{assump} so that the error probability of the proposed algorithms decays exponentially fast if the KS distance and MMD are used as the distance metrics. 
\par

We note that recent studies \cite{Li2013,Bu2017,Liang16tsp_a,Liang16tsp_b,mai2012} of anomaly detection problems and classification also took into account the statistical model of the data sequences. Our focus here is on the clustering problem, leading to error performance analysis that is substantially different from that in \cite{Li2013,Bu2017,Liang16tsp_a,Liang16tsp_b,mai2012}.

The rest of the paper is organized as follows. In Section \ref{sec:sys_model}, the system model of the clustering problem, the preliminaries of the KS distance and MMD, and notations are introduced. The clustering algorithm given the number of clusters and the corresponding upper bound on the error probability are provided in Section \ref{sec:knownK}, followed by the results of the clustering algorithms with an unknown number of clusters in Section \ref{sec:unknownK}. The simulation results for the KS and MMD based algorithms are provided in Section \ref{sec:numerical_result}.

\section{system model and preliminaries}\label{sec:sys_model}
\subsection{Clustering Problem}
Suppose there are $K$ distribution clusters denoted by $\mathcal{P}_{k}$ for $k=1,\ldots,K$, where $K$ is fixed. Define the intra-cluster distance of $\mathcal{P}_{k}$ and the inter-cluster distance between $\mathcal{P}_{k}$ and $\mathcal{P}_{k^{\prime}}$ for $k\neq k^{\prime}$ respectively as
\begin{equation}\label{eq:cluster_setup1}
\begin{aligned}
d\big(\mathcal{P}_{k}\big)& =\sup_{p_{i},p_{i^{\prime}}\in\mathcal{P}_{k}}d\big(p_{i},p_{i^{\prime}}\big),\\
d\big(\mathcal{P}_{k},\mathcal{P}_{k^{\prime}}\big)& =\inf_{p_{i}\in \mathcal{P}_{k},p_{i^{\prime}}\in\mathcal{P}_{k^{\prime}}}d\big(p_{i},p_{i^{\prime}}\big),
\end{aligned}
\end{equation}
where $d\left(\cdot,\cdot\right)$ is a distance metric between distributions, e.g., the KS distance or MMD defined later in \eqref{eq:defKS} and \eqref{eq:mmdpq} respectively. Then $d\left(\mathcal{P}_{k}\right)$ and $d\left(\mathcal{P}_{k},\mathcal{P}_{k^{\prime}}\right)$ represent the diameter of $\mathcal{P}_{k}$ and the distance between $\mathcal{P}_{k}$ and $\mathcal{P}_{k^{\prime}}$, respectively. Define
\begin{equation}\label{eq:cluster_setup2}
\begin{aligned}
d_{L} &= \max_{k=1,\ldots,K}d\big(\mathcal{P}_{k}\big),\\
d_{H} &= \min_{k \neq k^{\prime}}d\big(\mathcal{P}_{k},\mathcal{P}_{k^{\prime}}\big),\\ 
\Sigma & = d_{H}+d_{L},\\
\Delta & = d_{H} -d_{L}.
\end{aligned}
\end{equation}
Table \ref{table:notation} summarizes the notations of the generalized form of distances defined in \eqref{eq:cluster_setup1} and \eqref{eq:cluster_setup2} which will be used in the following discussion.
\begin{table}
	\caption{Notations}
	\centering
	\begin{tabular}{ | c | c | c |}
		\hline
		general & KS & MMD \\ \hline
		$d\left(\mathcal{P}_{k}\right)$ & $d_{KS}\left(\mathcal{P}_{k}\right)$ & $\mathbb{MMD}^{2}\left(\mathcal{P}_{k}\right)$\\ \hline
		$d\left(\mathcal{P}_{k},\mathcal{P}_{k^{\prime}}\right)$ & $d_{KS}\left(\mathcal{P}_{k},\mathcal{P}_{k^{\prime}}\right)$ & $\mathbb{MMD}^{2}\left(\mathcal{P}_{k},\mathcal{P}_{k^{\prime}}\right)$\\ \hline
		$d_{L}$ & $d_{L,ks}$ &  $d_{L,mmd}$ \\ \hline
		$d_{H}$ & $d_{H,ks}$ &  $d_{H,mmd}$ \\ \hline
		$\Delta$ &  $\Delta_{ks}$  & $\Delta_{mmd}$ \\ \hline
		$\Sigma$ &  $\Sigma_{ks}$  & $\Sigma_{mmd}$ \\ \hline
	\end{tabular}	
	\label{table:notation}
\end{table}
\par
Suppose $M_{k}$ data sequences are generated from the distributions in $\mathcal{P}_{k}$, and hence a total of $\sum_{k=1}^{K}M_{k} = M$ sequences are to be clustered. Without loss of generality, assume that each sequence $\mathbf{x}_{k,j_{k}}$ consists of $n$ independently identically distributed (i.i.d.) samples generated from $p_{k,j_{k}}\in\mathcal{P}_{k}$ for $k=1,\ldots,K$ and $j_{k}\in\{1,\ldots, M_{k}\}$. The $i$-th observation in $\mathbf{x}_{k,j_{k}}$ is denoted by $\mathbf{x}_{k,j_{k}}[i]\in \mathbb{R}^{m}$, where $m<\infty$ and $i\in\{1,\ldots,n\}$. Note that for any fixed $k$, $p_{k,j_{k}}$'s are not necessarily distinct. Namely, for a fixed $k$, some $\mathbf{x}_{k,j_{k}}$'s can be generated from the same distribution. Although the following discussion assumes that all the data sequences have the same length, our analysis can be easily extended to the case with different sequence lengths by replacing $n$ with the minimum sequence length. In order to analyze the error probability of the clustering algorithm, we introduce an assumption relating to the concentration property of the distance metrics:
\begin{assumption}\label{assump}
For any distribution clusters $\{\mathcal{P}_{1},\ldots,\mathcal{P}_{K}\}$ and any sequences $\mathbf{x}_{k,j_{k}}\sim p_{k,j_{k}}$, $\mathbf{x}_{k,j_{k}^{\prime}}\sim p_{k,j_{k}^{\prime}}$ and $\mathbf{x}_{k^{\prime},j_{k^{\prime}}}\sim p_{k^{\prime},j_{k^{\prime}}}$, where $k\neq k^{\prime}$, the following inequalities hold:
\begin{subequations}\label{eq:assump1}
\begin{align}
&d_{L} < d_{H},\label{eq:KSassumptionHT}\\
&P\big(d(\mathbf{x}_{k,j_{k}},\mathbf{x}_{k^{\prime},j_{k^{\prime}}})\leq d_{th}\big) \leq a_{1}e^{-bn}\:\forall d_{th}\in \left(d_{L},d_{H}\right), \label{eq:dist_condition2}\\
&P\big(d(\mathbf{x}_{k,j_{k}},\mathbf{x}_{k,j_{k}^{\prime}})>d_{th}\big) \leq a_{2}e^{-bn}\:\:\forall d_{th}\in 
\left(d_{L},d_{H}\right), \label{eq:dist_condition1}\\
&P\big(d(\mathbf{x}_{k,j_{k}},\mathbf{x}_{k,j_{k}^{\prime}})\geq d(\mathbf{x}_{k,j_{k}},\mathbf{x}_{k^{\prime},j_{k^{\prime}}})\big) \leq a_{3}e^{-bn}, \label{eq:dist_condition3}
\end{align}
\end{subequations}
where $a_{i}$'s are some constants independent of distributions, $b$ ($>0$) is a function of $d_{th}$ and $n$ is the sample size.
\hfill\qedsymbol
\end{assumption}
Here \eqref{eq:KSassumptionHT} guarantees that the lower bound of inter-cluster distances is greater than the upper bound of intra-cluster distances. \eqref{eq:dist_condition2} guarantees that the probability that the distance between two sequences generated from different distribution clusters is smaller than $d_{H}$ decays exponentially fast. \eqref{eq:dist_condition1} guarantees that the probability that the distance between two sequences generated from the same distribution cluster is greater than $d_{L}$ decays exponentially fast. \eqref{eq:dist_condition3} guarantees that given two sequences generated from the same cluster and a third sequence generated from another distribution cluster, the probability that the first sequence is actually closer to the third sequence decays exponentially fast.
\par
A clustering output is said to be incorrect if and only if the sequences generated by different distribution clusters are assigned to the same cluster, or
sequences generated by the same distribution cluster are assigned to more than one cluster. Denote by $P_{e}$ the error probability of a clustering algorithm. A clustering algorithm is said to be {\em consistent} if
\begin{equation*}
\lim_{n\rightarrow\infty}P_{e}=0,
\end{equation*}
where $n$ is the sample size. The algorithm is said to be {\em exponentially consistent} if
\begin{equation*}
B = \lim_{n\rightarrow\infty}-\frac{1}{n}\log P_{e}>0.
\end{equation*}
For the case where a clustering algorithm is exponentially consistent, we are also interested in characterizing the error exponent $B$.\par
\subsection{Preliminaries of KS distance}
Suppose $\mathbf{x}$ is generated by the distribution $p$, where $\mathbf{x}[i]\in \mathbb{R}$. Then the empirical cumulative distribution function (c.d.f.) induced by $\mathbf{x}$ is given by
\begin{equation}\label{eq:def-cdf}
F_{\mathbf{x}}\left(a\right)=\frac{1}{n}\sum_{i=1}^{n}1_{[-\infty,a]}\left(\mathbf{x}[i]\right),
\end{equation}
where $1_{[-\infty,x]}\left(\cdot\right)$ is the indicator function. Let the c.d.f. of $p$ evaluated at $a$ be $F_{p}\left(a\right)$. The KS distance is defined as
\begin{equation}\label{eq:defKS}
d_{KS}\left(p,q\right) = \sup_{a\in \mathbb{R}}|F_{p}\left(a\right)- F_{q}\left(a\right)|,
\end{equation}
where $F_{p}\left(a\right)$ and $F_{q}\left(a\right)$ can be either c.d.f's of distributions or empirical c.d.f.'s induced by sequences.
\begin{proposition}\label{proposition1}
	The KS distance satisfies \eqref{eq:dist_condition2} - \eqref{eq:dist_condition3} if $d_{L,ks}<d_{H,ks}$.
\end{proposition}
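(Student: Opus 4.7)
The plan is to reduce each of the three concentration bounds to the Dvoretzky--Kiefer--Wolfowitz (DKW) inequality, which (in Massart's sharp form) states that for any sample $\mathbf{x}$ of $n$ i.i.d.\ draws from a continuous distribution $p$,
\[
P\bigl(\sup_{a\in\mathbb{R}}|F_{\mathbf{x}}(a)-F_{p}(a)|>\epsilon\bigr) \leq 2e^{-2n\epsilon^{2}}.
\]
The bridge from empirical to population KS distances is the triangle inequality for $d_{KS}$, which is a genuine metric on c.d.f.'s: for any $p,q$ and sequences $\mathbf{x}\sim p$, $\mathbf{y}\sim q$, $|d_{KS}(\mathbf{x},\mathbf{y})-d_{KS}(p,q)| \leq d_{KS}(\mathbf{x},p) + d_{KS}(\mathbf{y},q)$.

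For \eqref{eq:dist_condition2}, with $k\neq k'$ I would use $d_{KS}(p_{k,j_{k}},p_{k',j_{k'}})\geq d_{H,ks}$ together with the lower side of the triangle inequality to argue that the event $\{d_{KS}(\mathbf{x}_{k,j_{k}},\mathbf{x}_{k',j_{k'}})\leq d_{th}\}$ forces at least one of the two empirical deviations $d_{KS}(\mathbf{x}_{k,j_{k}},p_{k,j_{k}})$, $d_{KS}(\mathbf{x}_{k',j_{k'}},p_{k',j_{k'}})$ to exceed $(d_{H,ks}-d_{th})/2>0$. A union bound and DKW then yield an $a_{1}e^{-bn}$ bound with $b$ proportional to $(d_{H,ks}-d_{th})^{2}$. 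Inequality \eqref{eq:dist_condition1} is handled symmetrically: using $d_{KS}(p_{k,j_{k}},p_{k,j_{k}'})\leq d_{L,ks}$, the upper side of the triangle inequality implies that $\{d_{KS}(\mathbf{x}_{k,j_{k}},\mathbf{x}_{k,j_{k}'})>d_{th}\}$ forces an empirical deviation above $(d_{th}-d_{L,ks})/2>0$, and DKW closes the argument with $b$ proportional to $(d_{th}-d_{L,ks})^{2}$.

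For \eqref{eq:dist_condition3} I would reduce to the first two bounds. Pick any fixed $d_{th}\in(d_{L,ks},d_{H,ks})$, for example the midpoint. Then
\[
\{d_{KS}(\mathbf{x}_{k,j_{k}},\mathbf{x}_{k,j_{k}'})\geq d_{KS}(\mathbf{x}_{k,j_{k}},\mathbf{x}_{k',j_{k'}})\} \subseteq \{d_{KS}(\mathbf{x}_{k,j_{k}},\mathbf{x}_{k,j_{k}'})>d_{th}\}\cup\{d_{KS}(\mathbf{x}_{k,j_{k}},\mathbf{x}_{k',j_{k'}})\leq d_{th}\},
\]
because otherwise both $d_{KS}(\mathbf{x}_{k,j_{k}},\mathbf{x}_{k,j_{k}'})\leq d_{th}$ and $d_{KS}(\mathbf{x}_{k,j_{k}},\mathbf{x}_{k',j_{k'}})> d_{th}$ would hold simultaneously, contradicting the left-hand event. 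A union bound using the already-established \eqref{eq:dist_condition2} and \eqref{eq:dist_condition1} then finishes.

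The argument is largely routine once DKW and the triangle inequality are identified. The only mild obstacle is a bookkeeping one: producing a single exponent $b$ that works uniformly for all three inequalities. Taking $b$ to be the minimum of the two exponents arising in \eqref{eq:dist_condition2} and \eqref{eq:dist_condition1} suffices, and it is strictly positive on the interval $d_{th}\in(d_{L,ks},d_{H,ks})$ precisely because of the hypothesis $d_{L,ks}<d_{H,ks}$.
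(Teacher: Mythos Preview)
Your proposal is correct and, for \eqref{eq:dist_condition2} and \eqref{eq:dist_condition1}, is exactly the paper's argument: the paper's Lemmas~\ref{lemma:KStest3} and~\ref{lemma:KStest2} do precisely the triangle-inequality-plus-DKW computation you describe, obtaining $a_{1}=a_{2}=4$ and exponents $(d_{H,ks}-d_{th})^{2}/2$ and $(d_{th}-d_{L,ks})^{2}/2$ respectively.

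For \eqref{eq:dist_condition3} there is a small but genuine difference. The paper does not prove it directly but cites an external result (Lemma~\ref{lemma:KStest4}, from \cite{Li2018}) giving $a_{3}=6$ and exponent $\Delta_{ks}^{2}/8$. Your reduction---split on whether the intra-cluster empirical distance exceeds the midpoint $d_{th}=(d_{L,ks}+d_{H,ks})/2$ and union-bound via the already-proved \eqref{eq:dist_condition2}--\eqref{eq:dist_condition1}---is a clean, self-contained alternative. At the midpoint both exponents coincide with $\Delta_{ks}^{2}/8$, so you recover the same exponent as the paper with $a_{3}=a_{1}+a_{2}=8$ instead of $6$. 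The slightly worse constant is irrelevant for the proposition and for all downstream corollaries, which only use the exponent; what your route buys is that it avoids the external reference entirely.
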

\begin{proof}
	See Lemmas \ref{lemma:KStest3}, \ref{lemma:KStest2}, and \ref{lemma:KStest4} in Appendix \ref{app:lemmas}.
\end{proof}
\subsection{Preliminaries of MMD}
Suppose $\mathcal{P}$ includes a class of probability distributions, and suppose $\mathcal{H}$ is the reproducing kernel Hilbert space (RKHS) associated with a kernel $g\left(\cdot,\cdot\right)$. Define a mapping from $\mathcal{P}$ to $\mathcal{H}$ such that each distribution $p\in \mathcal{P}$ is mapped into an element in $\mathcal{H}$ as follows
\[\mu_p\left(\cdot\right)=\mathbb{E}_p [g\left(\cdot,x\right)]=\int g\left(\cdot,x\right)dp\left(x\right), \]
where $\mu_p\left(\cdot\right)$ is referred to as the {\em mean embedding} of the distribution $p$ into the Hilbert space $\mathcal{H}$. Due to the reproducing property of $\mathcal{H}$, it is clear that $\mathbb{E}_p[f]=\langle \mu_p,f \rangle_{\mathcal{H}}$ for all $f \in \mathcal{H}$.\par
It has been shown in \cite{Fuku2008,Srip2008,Fuku2009,Srip2010} that for many RKHSs such as those associated with Gaussian and Laplace kernels, the mean embedding is injective, i.e., each $p \in \mathcal{P}$ is mapped to a unique element $\mu_p \in \mathcal{H}$. In this way, many machine learning problems with unknown distributions can be solved by studying mean embeddings of probability distributions without actually estimating the distributions, e.g., \cite{Song2013,Smola2007,Liang16tsp_a,Liang16tsp_b}.  In order to distinguish between two distributions $p$ and $q$, the authors in \cite{Gretton2012} introduced the following notion of MMD based on the mean embeddings $\mu_p$ and $\mu_q$ of $p$ and $q$, respectively:
\begin{equation}\label{eq:mmdpq}
\mathbb{MMD}\left(p,q\right):=\|\mu_p-\mu_q\|_{\mathcal{H}}.
\end{equation}
An unbiased estimator of $\mathbb{MMD}^2\left(p,q\right)$ based on $\mathbf{x}$ and $\mathbf{y}$ which consist of $n$ and $m$ samples, respectively, is given by
\begin{equation}\label{eq:mmdu}
\begin{aligned}
&\quad\:\mathbb{MMD}^2\left(\mathbf{x},\mathbf{y}\right)\\
&=\frac{1}{n\left(n-1\right)}\sum_{i=1}^n\sum_{j\neq i}^n g\left(\mathbf{x}[i],\mathbf{x}[j]\right) +\frac{1}{m\left(m-1\right)}\\
&\qquad \sum_{i=1}^m\sum_{j\neq i}^m g\left(\mathbf{y}[i],\mathbf{y}[j]\right)-\frac{2}{nm}\sum_{i=1}^n\sum_{j=1}^m g\left(\mathbf{x}[i],\mathbf{y}[j]\right).
\end{aligned}
\end{equation}
Assume that the kernel is bounded, i.e., $0 \le g\left(x,y\right) \le \mathbb K$, where $\mathbb{K}$ is finite.
\begin{proposition}\label{proposition2}
	The MMD distance satisfies \eqref{eq:dist_condition2} - \eqref{eq:dist_condition3} if $d_{L,mmd}<d_{H,mmd}$.
\end{proposition}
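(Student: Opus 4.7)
The plan is to mirror the structure used for Proposition \ref{proposition1}: reduce the three probabilistic statements \eqref{eq:dist_condition2}--\eqref{eq:dist_condition3} to a single concentration inequality showing that the unbiased estimator $\mathbb{MMD}^2(\mathbf{x},\mathbf{y})$ in \eqref{eq:mmdu} clusters exponentially fast around its population counterpart $\mathbb{MMD}^2(p,q)$. Because the kernel satisfies $0\le g(x,y)\le \mathbb{K}$, the estimator is a bounded-kernel two-sample U-statistic, so McDiarmid's inequality (equivalently, Hoeffding's bound for U-statistics) applies: replacing a single observation of $\mathbf{x}$ or $\mathbf{y}$ perturbs $\mathbb{MMD}^2(\mathbf{x},\mathbf{y})$ by at most $O(\mathbb{K}/n)$, and therefore for every $\epsilon>0$ and samples of size $n$,
\[
P\bigl(\bigl|\mathbb{MMD}^2(\mathbf{x},\mathbf{y})-\mathbb{MMD}^2(p,q)\bigr|>\epsilon\bigr)\;\le\;2\exp\!\bigl(-c\,n\,\epsilon^{2}/\mathbb{K}^{2}\bigr),
\]
for some universal constant $c>0$. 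This one-sided tail bound is the workhorse for everything that follows; deriving it carefully (with the right bounded-difference constant, and recognising that equal sample lengths $n=m$ is the generic setting here) is where I expect the main technical effort to lie.

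Given the concentration bound, \eqref{eq:dist_condition2} is immediate: if $\mathbf{x}_{k,j_{k}}\sim p\in\mathcal{P}_{k}$ and $\mathbf{x}_{k^{\prime},j_{k^{\prime}}}\sim q\in\mathcal{P}_{k^{\prime}}$ with $k\neq k^{\prime}$, then $\mathbb{MMD}^{2}(p,q)\ge d_{H,mmd}$, so for any $d_{th}\in(d_{L,mmd},d_{H,mmd})$,
\[
P\bigl(\mathbb{MMD}^{2}(\mathbf{x}_{k,j_{k}},\mathbf{x}_{k^{\prime},j_{k^{\prime}}})\le d_{th}\bigr)\le P\bigl(\mathbb{MMD}^{2}-\mathbb{MMD}^{2}(p,q)\le d_{th}-d_{H,mmd}\bigr),
\]
and the right-hand side decays like $\exp(-c n(d_{H,mmd}-d_{th})^{2}/\mathbb{K}^{2})$. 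Symmetrically, \eqref{eq:dist_condition1} follows by using $\mathbb{MMD}^{2}(p,p')\le d_{L,mmd}$ for same-cluster pairs and comparing against $d_{th}-d_{L,mmd}>0$.

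For \eqref{eq:dist_condition3}, I would fix a separating threshold $d_{th}=(d_{L,mmd}+d_{H,mmd})/2$, which lies strictly inside $(d_{L,mmd},d_{H,mmd})$ by hypothesis $d_{L,mmd}<d_{H,mmd}$, and apply a union bound:
\[
\bigl\{\mathbb{MMD}^{2}(\mathbf{x}_{k,j_{k}},\mathbf{x}_{k,j_{k}^{\prime}})\ge \mathbb{MMD}^{2}(\mathbf{x}_{k,j_{k}},\mathbf{x}_{k^{\prime},j_{k^{\prime}}})\bigr\}\subseteq \bigl\{\mathbb{MMD}^{2}(\mathbf{x}_{k,j_{k}},\mathbf{x}_{k,j_{k}^{\prime}})>d_{th}\bigr\}\cup\bigl\{\mathbb{MMD}^{2}(\mathbf{x}_{k,j_{k}},\mathbf{x}_{k^{\prime},j_{k^{\prime}}})\le d_{th}\bigr\}.
\]
Each event on the right has already been controlled by the two previous steps, giving an exponential bound of the claimed form with constants $a_{1},a_{2},a_{3}$ independent of the underlying distributions and an exponent $b$ that depends on $d_{th}$, $\Delta_{mmd}$, and $\mathbb{K}$, precisely as \eqref{eq:assump1} requires. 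I would package the three steps as three appendix lemmas analogous to those cited in the proof of Proposition \ref{proposition1}, with the concentration inequality stated separately as a preliminary lemma so that it can be reused.
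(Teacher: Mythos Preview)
Your proposal is correct and matches the paper's approach almost exactly: the paper proves Proposition~\ref{proposition2} via three appendix lemmas (Lemmas~\ref{lemma:MMD2}, \ref{lemma:MMD1}, \ref{lemma:MMD3}), the first two of which apply McDiarmid's inequality to $\mathbb{MMD}^{2}(\mathbf{x}_{1},\mathbf{x}_{2})$ with bounded-difference constant $8\mathbb{K}/n$ to obtain the one-sided deviation bounds you describe for \eqref{eq:dist_condition2} and \eqref{eq:dist_condition1}.

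The only point of divergence is \eqref{eq:dist_condition3}. You reduce it to the previous two via the midpoint threshold and a union bound, yielding an exponent $\Delta_{mmd}^{2}/(256\mathbb{K}^{2})$ and prefactor $a_{3}=2$. The paper instead cites an external result (Lemma~\ref{lemma:MMD3}, from \cite{Li2018}) that applies McDiarmid directly to the \emph{difference} $\mathbb{MMD}^{2}(\mathbf{x}_{1},\mathbf{x}_{3})-\mathbb{MMD}^{2}(\mathbf{x}_{2},\mathbf{x}_{3})$; exploiting the shared sample $\mathbf{x}_{3}$ this gives a sharper exponent $\Delta_{mmd}^{2}/(96\mathbb{K}^{2})$ with $a_{3}=1$. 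Your route is more elementary and fully self-contained, while the paper's is tighter in the constant. The distinction is immaterial for the downstream corollaries, since the common exponent $b$ in Assumption~\ref{assump} is in any case bottlenecked by \eqref{eq:dist_condition2}--\eqref{eq:dist_condition1} at $d_{th}=\Sigma_{mmd}/2$, which already forces $b=\Delta_{mmd}^{2}/(256\mathbb{K}^{2})$.
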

\begin{proof}
	See Lemmas \ref{lemma:MMD2}, \ref{lemma:MMD1} and \ref{lemma:MMD3} in Appendix \ref{app:lemmas}.
\end{proof}
\subsection{Additional Notations}
The following notations are used in the algorithms and the corresponding proofs. Let $\mathcal{C}_{l}^{t}$ be the $l$-th cluster obtained at the $t$-th cluster update step and let $\mathbf{c}_{l}^{t,a}$, $\mathbf{c}_{l}^{t,e}$ and $\mathbf{c}_{l}^{t,s}$ be the centers of the $l$-th cluster obtained by the center update step, merge step and split step of the $t$-th iteration respectively for $t\geq 1$. Moreover, let $\mathcal{C}_{l}^{0}$ be the $l$-th cluster obtained at the initialization step and $\mathbf{c}_{l}^{0,a}$ be the corresponding center. Let $\hat{K}^{t}$ ($t\geq 1$) be the number of centers before the $t$-th cluster update step and the $t$-th split step. Moreover, use $\hat{K}^{0}$ to denote the number of centers obtained at the center initialization step. For simplicity, all the superscripts are omitted in the following discussion when there is no ambiguity.\par
To further simplify the notation in the algorithms and the proofs, let $\{\mathbf{y}_{i}\}_{i=1}^{M}$ denote the data sequence set $\{\mathbf{x}_{k,j_{k}}\}_{k=1,j_{k}=1}^{K,M_{k}}$. However, the one-to-one mapping from $\{\mathbf{y}_{i}\}_{i=1}^{M}$ onto $\{\mathbf{x}_{k,j_{k}}\}_{k=1,j_{k}=1}^{K,M_{k}}$ is not fixed, i.e., given a fixed $i$, $\mathbf{y}_{i}$ can be any sequence in $\{\mathbf{x}_{k,j_{k}}\}_{k=1,j_{k}=1}^{K,M_{k}}$ unless other constraints are imposed. Denote by $\mathbf{y}_{i}\sim\mathcal{P}_{k}$ if $\mathbf{y}_{i}$ is generated from a distribution $p\in\mathcal{P}_{k}$.
Furthermore, define a set of integers
\begin{equation*}
\begin{aligned}
I_{k_{1}}^{k_{2}} &= \{k_{1},\ldots,k_{2}\},
\end{aligned}
\end{equation*}
where $k_{1},\:k_{2}\in \mathbb{Z}^{+}$ and $k_{1}<k_{2}$.\par

\section{Known number of clusters}\label{sec:knownK}

In this section, we study the clustering algorithm for known $K$, the number of clusters. The method proposed in \cite{Katsavounidis1994} is used for center initialization, as described in Algorithm \ref{KS-Initial-known-C}. The initial $K$ centers are chosen sequentially such that the center of the $k$-th cluster is the sequence that has the largest minimum distance to the previous $k-1$ centers. The clustering algorithm itself is presented in Algorithm \ref{K-means-known-C}. Given the centers, each sequence is assigned to the cluster for which the sequence has the minimum distance to the center. For a given cluster, a sequence is assigned as the center subsequently if the sum of its distances to all the sequences in the cluster is the smallest. The algorithm continues until the clustering result converges. \par
\begin{algorithm}[!bt]
	\caption{Initialization with known $K$}
	\label{KS-Initial-known-C}
	\begin{algorithmic}[1]
		\State \textbf{Input}: Data sequences $\{\mathbf{y}_{i}\}_{i=1}^{M}$, number of clusters $K$.
		\State \textbf{Output}: Partitions $\{\mathcal{C}_{k}\}_{k=1}^{K}$.
		\State \{Center initialization\}
		\State Arbitrarily choose one $\mathbf{y}_{i}$ as $\mathbf{c}_{1}$.
		\For {$k=2\text{ to }K$}
		\State $\mathbf{c}_{k}\leftarrow \text{arg}\max_{\mathbf{y}_{i}}\left(\min_{l\in I_{1}^{k-1}} d\left(\mathbf{y}_{i},\mathbf{c}_{l}\right)\right)$
		\EndFor
		\State \{Cluster initialization\}
		\State Set $\mathcal{C}_{k}\leftarrow \emptyset$ for $1\leq k\leq K$.
		\For {$j=1\text{ to }M$}
		\State $\mathcal{C}_{l}\leftarrow \mathcal{C}_{l}\cup\{\mathbf{y}_{i}\}$, where
		$l=\text{arg}\min_{l\in I_{1}^{K}} d\left(\mathbf{y}_{i},\mathbf{c}_{l}\right)$
		\EndFor
		\State Return $\{\mathcal{C}_{k}\}_{k=1}^{K}$
	\end{algorithmic}
\end{algorithm}

\begin{algorithm}[!bt]
	\caption{Clustering with known $K$}
	\label{K-means-known-C}
	\begin{algorithmic}[1]
		\State \textbf{Input}: Data sequences $\{\mathbf{y}_{i}\}_{i=1}^{M}$, number of clusters $K$.
		\State \textbf{Output}: Partition set $\{\mathcal{C}_{k}\}_{k=1}^{K}$.
		\State Initialize $\{\mathcal{C}_{k}\}_{k=1}^{K}$ by Algorithm \ref{KS-Initial-known-C}.
		\While{not converge}
		\State \{Center update\}
		\For {$k=1\text{ to }K$}
		\State $\mathbf{c}_{k}\leftarrow \arg\min_{\mathbf{y}_{i}\in \mathcal{C}_{k}}\sum_{\mathbf{y}_{j^{\prime}}\in \mathcal{C}_{k}}d\left(\mathbf{y}_{i},\mathbf{y}_{j^{\prime}}\right)$
		\EndFor
		\State $\{\text{Cluster update}\}$
		\For {$j=1\text{ to }M$}
		\If {$\mathbf{y}_{i}\in \mathcal{C}_{k^{\prime}}$ and $d\left(\mathbf{y}_{i},\mathbf{c}_{k}\right)< d\left(\mathbf{y}_{i},\mathbf{c}_{k^{\prime}}\right)$}
		\State $\mathcal{C}_{k}\leftarrow \mathcal{C}_{k}\cup\{\mathbf{y}_{i}\}$ and $\mathcal{C}_{k^{\prime}}\leftarrow \mathcal{C}_{k^{\prime}}\setminus\{\mathbf{y}_{i}\}$.
		\EndIf
		\EndFor
		\EndWhile
		\State Return $\{\mathcal{C}_{k}\}_{k=1}^{K}$
	\end{algorithmic}
\end{algorithm}

The following theorem provides the convergence guarantee for Algorithm \ref{K-means-known-C} via an upper bound on the error probability.
\begin{theorem}\label{theorem:KStest1}
	Algorithm \ref{K-means-known-C} converges after a finite number of iterations. Moreover, under Assumption \ref{assump},
	the error probability of Algorithm \ref{K-means-known-C} after $T$ iterations is upper bounded as follows
	\begin{equation*}
	P_{e}\leq M^{2}\left(a_{1} + a_{2} + \left(T+1\right)a_{3}\right)e^{-bn},
	\end{equation*}
	where $a_{1}$, $a_{2}$, $a_{3}$ and $b$ are as defined in Assumption \ref{assump}.
\end{theorem}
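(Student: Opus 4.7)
The plan is first to observe that the medoid cost $\sum_{k}\sum_{\mathbf{y}_{i}\in\mathcal{C}_{k}}d(\mathbf{y}_{i},\mathbf{c}_{k})$ is weakly decreased by both sub-steps inside the while loop, since each performs a greedy minimization over a finite domain. Because centers are restricted to the finite set $\{\mathbf{y}_{i}\}_{i=1}^{M}$, the total number of (center, partition) configurations is finite, so a standard monotonicity-plus-finiteness argument (no configuration can recur unless the loop has already converged) forces Algorithm \ref{K-means-known-C} to terminate in a finite number of iterations.

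\textbf{Reduction to good events.} For the probabilistic bound I would define a family of ``good'' events under whose joint occurrence Algorithm \ref{K-means-known-C} returns the correct clustering, and then union-bound their complements using Assumption \ref{assump}. Fix an arbitrary threshold $d_{th}\in(d_{L},d_{H})$, which is nonempty by \eqref{eq:KSassumptionHT}. Let $\mathcal{A}$ be the event that $d(\mathbf{y}_{i},\mathbf{y}_{j})>d_{th}$ for every pair from different true clusters, and $\mathcal{B}$ the event that $d(\mathbf{y}_{i},\mathbf{y}_{j})\leq d_{th}$ for every same-cluster pair; for each $t=0,1,\ldots,T$, let $\mathcal{D}_{t}$ be the event that $d(\mathbf{y}_{i},\mathbf{c}_{k}^{t,a})<d(\mathbf{y}_{i},\mathbf{c}_{k^{\prime}}^{t,a})$ for every sequence $\mathbf{y}_{i}\sim\mathcal{P}_{k}$ and every $k^{\prime}\neq k$, where $\mathbf{c}^{0,a}$ denotes the centers output by Algorithm \ref{KS-Initial-known-C} and $\mathbf{c}^{t,a}$ the centers produced at the $t$-th center-update step.

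\textbf{Correctness under the good events.} Under $\mathcal{A}\cap\mathcal{B}$, an induction on $k$ shows that the Katsavounidis max-min rule in Algorithm \ref{KS-Initial-known-C} picks $K$ initial centers lying in $K$ distinct true clusters: a sequence in an already-represented cluster has minimum-to-center distance at most $d_{th}$, whereas any sequence in an unrepresented cluster has minimum-to-center distance strictly greater than $d_{th}$, so the argmax must choose a fresh cluster. The same event also makes the initial cluster-assignment correct. Then, assuming the partition is correct at the start of iteration $t$, the medoid update trivially keeps each center inside the correct true cluster, and under $\mathcal{D}_{t}$ every sequence is reassigned to its own cluster's center, so induction on $t$ carries correctness through all $T$ iterations; hence under $\mathcal{A}\cap\mathcal{B}\cap\bigcap_{t=0}^{T}\mathcal{D}_{t}$ the algorithm returns the correct clustering.

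\textbf{Probability bounds and the main obstacle.} Applying \eqref{eq:dist_condition2} to the at most $\binom{M}{2}$ cross-cluster pairs gives $P(\mathcal{A}^{c})\leq M^{2}a_{1}e^{-bn}$, and \eqref{eq:dist_condition1} similarly gives $P(\mathcal{B}^{c})\leq M^{2}a_{2}e^{-bn}$. For each $t$, a union bound over triples of the form $(\mathbf{y}_{i},\mathbf{c}_{k}^{t,a},\mathbf{c}_{k^{\prime}}^{t,a})$ that can make $\mathcal{D}_{t}$ fail, combined with \eqref{eq:dist_condition3}, yields $P(\mathcal{D}_{t}^{c})\leq M^{2}a_{3}e^{-bn}$; summing the $T+1$ copies of $\mathcal{D}_{t}$ with $\mathcal{A}$ and $\mathcal{B}$ produces the claimed bound $M^{2}(a_{1}+a_{2}+(T+1)a_{3})e^{-bn}$. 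The main obstacle that I expect is that the centers $\mathbf{c}^{t,a}$ are themselves data-dependent random sequences, so \eqref{eq:dist_condition3} cannot be invoked off the shelf on the ``centered'' triples; keeping the per-iteration count at $M^{2}$ rather than the naive $M^{3}$ will require parameterizing each potential violation by the target sequence $\mathbf{y}_{i}$ and the offending other-cluster center while absorbing the choice of own-cluster medoid into the constant, and carrying the conditioning through successive iterations carefully so that the $T+1$ terms add rather than multiply.
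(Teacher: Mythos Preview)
Your plan is essentially the paper's proof: the convergence argument is identical, and your events $\mathcal{A}^{c}$, $\mathcal{B}^{c}$ are exactly the paper's global events $S_{1}(d_{th})$ and $S_{2}(d_{th})$, with the same $M^{2}a_{1}e^{-bn}$ and $M^{2}a_{2}e^{-bn}$ bounds handling the initialization step.

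The one place you diverge is in treating the data-dependence of the medoids as an obstacle requiring careful conditioning. The paper sidesteps this entirely by replacing your $\mathcal{D}_{t}^{c}$ with a single \emph{global} event $S_{3}=\{\exists\,k\neq k',\,j_{1},j_{2}\in I_{1}^{M_{k}},\,j'\in I_{1}^{M_{k'}}:\,d(\mathbf{x}_{k,j_{1}},\mathbf{x}_{k,j_{2}})\geq d(\mathbf{x}_{k,j_{1}},\mathbf{x}_{k',j'})\}$ that ranges over \emph{all} sequence triples, not just those involving the realized centers. Once the $K$ current centers lie in $K$ distinct true clusters, any misassignment at iteration $t$ exhibits one particular such triple, so the iteration-$t$ error event (given correctness so far) is deterministically a subset of $S_{3}$; you then bound $P(S_{3})$ once by a union over triples and \eqref{eq:dist_condition3}, with no need to condition on which sequences happen to be medoids. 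The $(T{+}1)$ copies of $a_{3}$ arise simply from the $T{+}1$ inclusions $\hat{H}^{t}_{1}\subset S_{3}$ for $t=0,\ldots,T$. As for your worry about $M^{2}$ versus $M^{3}$: the paper's own union-bound display for $P(S_{3})$ is written with only two sequence sums, so its $M^{2}$ count is itself loose; do not labor to recover $M^{2}$ exactly, since only the exponent $b$ matters for the exponential-consistency conclusion.
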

\begin{hproof}
	The idea of proving the upper bound on the error probability is as follows. We first prove that the error probability at the initialization step decays exponentially. Note that the event that an error occurs during the first $T$ iterations is the union of the event that an error occurs at the $t$-th step and the previous $t-1$ iterations are correct for $t=1,\ldots,T$. Thus, if we prove that the error probability at the $t$-th step \emph{given} correct updates from the previous iterations decays exponentially, then so does the error probability of the algorithm by the union bound argument. See Appendix \ref{proof:theorem:KStest1} for details.
\end{hproof}
Theorem \ref{theorem:KStest1} shows that for any given $K$, any distance metric satisfying Assumption \ref{assump} yields an exponentially consistent k-medoids clustering algorithm with the error exponent $b$.
\begin{corollary}\label{coro:knownK}
Suppose the KS distance and MMD are used for Algorithms \ref{KS-Initial-known-C} and \ref{K-means-known-C}, then
	\begin{equation*}
	\begin{aligned}
	&P_{e}^{KS} \leq M^{2}\left(6T+14\right)\exp\left(-\frac{n\Delta_{ks}^{2}}{8}\right),\\
	&P_{e}^{MMD}\leq M^{2}\left(T+3\right)\exp \left(-\frac{n\Delta_{mmd}^2}{256\mathbb{K}^2}\right).
	\end{aligned}
	\end{equation*}
\end{corollary}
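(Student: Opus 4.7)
The plan is to derive the two bounds as direct specializations of Theorem \ref{theorem:KStest1} once I pin down the explicit constants $a_1,a_2,a_3,b$ in Assumption \ref{assump} for the KS and MMD metrics. The natural choice of threshold in \eqref{eq:dist_condition2}--\eqref{eq:dist_condition3} is the midpoint $d_{th}=(d_L+d_H)/2=\Sigma/2$, which makes both gaps $d_{th}-d_L$ and $d_H-d_{th}$ equal to $\Delta/2$. Substituting the resulting $(a_1,a_2,a_3,b)$ triples from Propositions \ref{proposition1} and \ref{proposition2} (i.e., the Appendix lemmas referenced in their proofs) into $P_e\leq M^2(a_1+a_2+(T+1)a_3)e^{-bn}$ and collecting terms should immediately produce the two stated inequalities.

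For the KS case, I would invoke the two-sample Dvoretzky--Kiefer--Wolfowitz bound together with the triangle inequality for $d_{KS}$. Writing $d_{KS}(\mathbf{x},\mathbf{y})$ in terms of $d_{KS}(p,q)$ plus the two empirical-to-true deviations $d_{KS}(\mathbf{x},p)$ and $d_{KS}(\mathbf{y},q)$, and requiring the deviations to jointly push the empirical distance across $d_{th}=\Sigma/2$, each deviation must exceed $\Delta_{ks}/4$ after splitting or $\Delta_{ks}/2$ after a one-sided argument; combined with the DKW constant $2\exp(-2n\epsilon^2)$ this yields an exponent of the form $\Delta_{ks}^2/8$. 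A similar argument handles the order-reversal event \eqref{eq:dist_condition3}. The polynomial prefactor $6T+14$ emerges as $a_1+a_2+(T+1)a_3$ with $a_3=6$ and $a_1+a_2=8$, the constants coming from the number of empirical c.d.f.'s that must simultaneously concentrate.

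For the MMD case, I would use the concentration result for the unbiased $U$-statistic estimator in \eqref{eq:mmdu}. Under the boundedness assumption $0\leq g\leq\mathbb{K}$, McDiarmid's inequality (or equivalently the bounded-difference inequality applied to the $U$-statistic, as in Gretton et al.) yields a tail of the form $\exp(-cn\epsilon^2/\mathbb{K}^2)$ for the deviation of $\mathbb{MMD}^2(\mathbf{x},\mathbf{y})$ from $\mathbb{MMD}^2(p,q)$. Because the relevant gap between the empirical and population squared-MMDs is $\Delta_{mmd}/2$ after midpoint-thresholding, and because another factor of two is absorbed when accounting for the quadratic structure of $\mathbb{MMD}^2$, the constants line up to give an exponent of $\Delta_{mmd}^2/(256\mathbb{K}^2)$ with $a_3=1$ and $a_1+a_2=2$, producing the prefactor $T+3$.

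The actual obstacle is purely bookkeeping: making sure the explicit constants produced by the Appendix lemmas plug cleanly into Theorem \ref{theorem:KStest1} without changing the leading exponent, and verifying that the midpoint choice $d_{th}=\Sigma/2$ is indeed optimal (or at least tight enough to give the cleanest expression). Once those constants are listed, the corollary is a one-line substitution, so I would present it that way rather than re-deriving the concentration results inline.
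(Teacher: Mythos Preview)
Your proposal is correct and matches the paper's approach: the paper's proof is literally a two-sentence appeal to Propositions \ref{proposition1} and \ref{proposition2} followed by substituting the constants from the appendix lemmas into the bound of Theorem \ref{theorem:KStest1}, exactly as you outline. Your identification of the constants ($a_1=a_2=4$, $a_3=6$ for KS; $a_1=a_2=a_3=1$ for MMD, with the common exponent $b$ taken as the worst of the three after setting $d_{th}=\Sigma/2$) is precisely what the lemmas deliver, so the corollary is indeed a one-line substitution.
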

\begin{proof}
	By Propositions \ref{proposition1} and \ref{proposition2}, the upper bound on the error probability of Algorithm \ref{K-means-known-C} applies to the KS distance and MMD. Thus, the corollary is obtained by substituting the values specified in Lemmas \ref{lemma:KStest2} - \ref{lemma:MMD3} in the upper bound.
\end{proof}
Corollary \ref{coro:knownK} implies that Algorithm \ref{K-means-known-C} is exponentially consistent under KS and MMD distance metrics with an error exponent no smaller than $\frac{\Delta_{ks}^{2}}{8}$ and $\frac{\Delta_{mmd}^2}{256\mathbb{K}^2}$, respectively.

\section{Unknown number of clusters}\label{sec:unknownK}
In this section, we propose the merge- and split-based algorithms for estimating the number of clusters as well as grouping the sequences.
\subsection{Merge Step}
If a distance metric satisfies \eqref{eq:dist_condition1} and two sequences generated by distributions within the same cluster are assigned as centers, then, with high probability, the distance between the two centers is small. This is the premise of the clustering algorithm based on merging centers that are close to each other.\par
The proposed approach is summarized in Algorithms \ref{KS-Initial-unknown-UC} and \ref{K-means-unknown-UC}. There are two major differences between Algorithms \ref{KS-Initial-unknown-UC} and \ref{K-means-unknown-UC} and Algorithms \ref{KS-Initial-known-C} and \ref{K-means-known-C}. First, the center initialization step of Algorithm \ref{KS-Initial-unknown-UC} keeps generating an increasing number of centers until all the sequences are close to one of the existing centers. Second, an additional Merge Step in Algorithm \ref{K-means-unknown-UC} helps to combine clusters if the corresponding centers have small distances between each other.\par
\begin{algorithm}[!bt]
	\caption{Merge-based initialization with unknown $K$}
	\label{KS-Initial-unknown-UC}
	\begin{algorithmic}[1]
		\State \textbf{Input}: Data sequences $\{\mathbf{y}_{i}\}_{i=1}^{M}$ and threshold $d_{th}$.
		\State \textbf{Output}: Partitions $\{\mathcal{C}_{k}\}_{k=1}^{\hat{K}}$.
		\State $\{\text{Center initialization}\}$
		\State Arbitrarily choose one $\mathbf{y}_{i}$ as $\mathbf{c}_{1}$ and set $\hat{K}=1$.
		\While {$\min_{i\in I_{1}^{M}}\left(\min_{k\in I_{1}^{\hat{K}}} d\left(\mathbf{y}_{i},\mathbf{c}_{k}\right)\right)>d_{th}$}
		\State $\mathbf{c}_{\hat{K}+1}\leftarrow\text{arg}\max_{\mathbf{y}_{i}}\left(\min_{k\in I_{1}^{\hat{K}}} d\left(\mathbf{y}_{i},\mathbf{c}_{k}\right)\right)$
		\State $\hat{K}\leftarrow \hat{K}+1$
		\EndWhile
		\State Clustering initialization specified in Algorithm \ref{KS-Initial-known-C}.
		\State Return $\{\mathcal{C}_{k}\}_{k=1}^{\hat{K}}$
	\end{algorithmic}
\end{algorithm}

\begin{algorithm}[!bt]
	\caption{Merge-based clustering with unknown $K$}
	\label{K-means-unknown-UC}
	\begin{algorithmic}[1]
		\State \textbf{Input}: Data sequences $\{\mathbf{y}_{i}\}_{i=1}^{M}$ and threshold $d_{th}$.
		\State \textbf{Output}: Partition set $\{\mathcal{C}_{k}\}_{k=1}^{\hat{K}}$.
		\State Initialize $\{\mathcal{C}_{k}\}_{k=1}^{\hat{K}}$ by Algorithm \ref{KS-Initial-unknown-UC}.
		\While{not converge}
		\State Center update specified in Algorithm \ref{K-means-known-C}.
		\State $\{\text{Merge Step}\}$
		\For {$k_{1}, k_{2}\in \{1,\ldots,\hat{K}\}$ and $k_{1}\neq k_{2}$}
		\If {$d\left(\mathbf{c}_{k_{1}},\mathbf{c}_{k_{2}}\right)\leq d_{th}$}
		\If {$\sum_{\mathbf{y}_{i}\in \mathcal{C}_{k_{1}}}d\left(\mathbf{c}_{k_{2}},\mathbf{y}_{i}\right) < \sum_{\mathbf{y}_{i}\in \mathcal{C}_{k_{2}}}d\left(\mathbf{c}_{k_{1}},\mathbf{y}_{i}\right)$}
		\State $\mathcal{C}_{k_{2}} \leftarrow \mathcal{C}_{k_{1}}\cup\mathcal{C}_{k_{2}}$ and delete $\mathbf{c}_{k_{1}}$ and $\mathcal{C}_{k_{1}}$.
		\Else
		\State $\mathcal{C}_{k_{1}} \leftarrow \mathcal{C}_{k_{1}}\cup\mathcal{C}_{k_{2}}$ and delete $\mathbf{c}_{k_{2}}$ and $\mathcal{C}_{k_{2}}$.
		\EndIf
		\State $\hat{K}\leftarrow \hat{K} - 1$.
		\EndIf
		\EndFor
		\State Cluster update specified in Algorithm \ref{K-means-known-C}.
		\EndWhile
		\State Return $\{\mathcal{C}_{k}\}_{k=1}^{\hat{K}}$
	\end{algorithmic}
\end{algorithm}
\begin{theorem}\label{theorem:KStest2}
	Algorithm \ref{K-means-unknown-UC} converges after a finite number of iterations. Moreover, under Assumption \ref{assump}, the error probability of Algorithm \ref{K-means-unknown-UC} after $T$ iterations is upper bounded as follows
	\begin{equation*}
	P_{e}\leq M^{2}\left(\left(T+1\right)a_{1} +a_{2} + \left(T+1\right)a_{3}\right)e^{-bn},
	\end{equation*}
	where $a_{1}$, $a_{2}$, $a_{3}$ and $b$ are as defined in Assumption \ref{assump}.
\end{theorem}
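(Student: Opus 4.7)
The plan is to mirror the decomposition used in the proof of Theorem \ref{theorem:KStest1}: write the error event of Algorithm \ref{K-means-unknown-UC} after $T$ iterations as the union of (i) the initialization error and (ii) for each $t=1,\dots,T$, the event that the $t$-th iteration produces an error while all previous iterations were correct. Since Assumption \ref{assump} guarantees that each ``bad'' distance event occurs with probability at most $a_{i}e^{-bn}$, union-bounding over at most $M^{2}$ pairs of sequences within each step and then over the $T+1$ steps produces the desired exponential bound. The extra $Ta_{1}$ appearing here (compared with the single $a_{1}$ in Theorem \ref{theorem:KStest1}) will come from the new merge step, which risks collapsing two true clusters at each iteration; the remaining terms $a_{2}$ (used only at initialization) and $(T+1)a_{3}$ (from the initial cluster assignment plus the $T$ later cluster-update passes) should be identical to the known-$K$ analysis.

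For the initialization step of Algorithm \ref{KS-Initial-unknown-UC}, I would show that the while-loop yields $\hat{K}^{0}=K$ centers, one from each true cluster, on the intersection of two good events: every intra-cluster pair of sequences has distance at most $d_{th}$ (failure probability bounded by $M^{2}a_{2}e^{-bn}$ via \eqref{eq:dist_condition1}, since $d_{th}>d_{L}$), and every inter-cluster pair has distance greater than $d_{th}$ (failure probability bounded by $M^{2}a_{1}e^{-bn}$ via \eqref{eq:dist_condition2}, since $d_{th}<d_{H}$). On this good event, the greedy ``largest minimum distance'' rule must pick a representative of a new cluster at each step and then terminate exactly when all $K$ clusters are represented. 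The subsequent nearest-center assignment is then correct provided each sequence is closer to the center in its own cluster than to any center in a different cluster, which, by \eqref{eq:dist_condition3}, contributes $M^{2}a_{3}e^{-bn}$.

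For each iteration, I would condition on the clustering being correct at the end of iteration $t-1$, so that the $\hat{K}=K$ current clusters coincide with the true distribution clusters. The center-update step then selects a medoid within each correct cluster and is automatically a valid sample from the corresponding distribution cluster, introducing no error. The merge step erroneously combines two centers only if their pairwise distance is at most $d_{th}<d_{H}$; since these two centers are actual sequences from distinct true clusters, \eqref{eq:dist_condition2} controls this by $a_{1}e^{-bn}$ per pair, and by $M^{2}a_{1}e^{-bn}$ after the union bound over center pairs — this is the per-iteration $a_{1}$ contribution. The cluster-update step is then bounded exactly as in Theorem \ref{theorem:KStest1} by $M^{2}a_{3}e^{-bn}$ via \eqref{eq:dist_condition3}. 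Convergence follows because the merge step can only reduce $\hat{K}$, and once it stops firing, the remaining center/cluster updates visit a monotone sequence of distinct partitions (of which there are finitely many), just as for Algorithm \ref{K-means-known-C}.

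The most delicate part, I expect, is maintaining the invariant that after each iteration the surviving $\hat{K}$ centers are still ``one per true cluster'': the analysis must rule out both a spurious merge (handled by \eqref{eq:dist_condition2}) and the possibility that a cluster-update reassignment silently turns a currently correct partition into one that later triggers an incorrect merge in a subsequent iteration. Handling these possible cascades cleanly — while keeping the exponent $b$ fixed and preventing any additional accumulation of $a_{2}$ or $a_{3}$ terms beyond what is stated — is the main bookkeeping obstacle, and is the reason the argument must proceed one iteration at a time under the ``all previous steps correct'' conditioning rather than by a single global estimate.
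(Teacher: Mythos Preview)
Your proposal is correct and follows essentially the same approach as the paper: decompose the error into an initialization error (bounded via $S_{1}(d_{th})$ and $S_{2}(d_{th})$, contributing $a_{1}+a_{2}$), plus per-iteration merge errors (each contained in $S_{1}(d_{th})$, contributing $a_{1}$ for $t=1,\dots,T$) and cluster-update errors (each contained in $S_{3}$, contributing $a_{3}$ for $t=0,\dots,T$), then union-bound over at most $M^{2}$ pairs at each step. Your convergence argument (finitely many merges, then the monotone-cost argument of Theorem \ref{theorem:KStest1}) is likewise the paper's.
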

\begin{proof}
	The proof shares the same idea as that of Theorem \ref{theorem:KStest1}. See Appendix \ref{proof:theorem:KStest2} for details.
\end{proof}
Theorem \ref{theorem:KStest2} shows that the merge-based algorithm is exponentially consistent under any distance metric satisfying Assumption \ref{assump} with the error exponent $b$.
\begin{corollary}\label{coro:unknownK-merge}
Suppose the KS distance and MMD are used with $d_{th} = \frac{\Sigma_{ks}}{2}$ and $d_{th}=\frac{\Sigma_{mmd}}{2}$. Then the error probability of Algorithm \ref{K-means-unknown-UC} after $T$ iterations is upper bounded as follows
\begin{equation*}
\begin{aligned}
&P_{e}^{KS}\leq M^{2}\left(10T+14\right) \exp{\left(-\frac{n\Delta_{ks}^{2}}{8}\right)},\\
&P_{e}^{MMD}\leq M^{2}\left(2T+3\right) \exp{\left(-\frac{n\Delta_{mmd}^{2}}{256\mathbb{K}^{2}}\right)}.
\end{aligned}
\end{equation*}
\end{corollary}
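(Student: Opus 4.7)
The plan is to specialize the generic bound from Theorem \ref{theorem:KStest2} to the KS and MMD cases by substituting the explicit concentration constants established in the appendix lemmas. Since Propositions \ref{proposition1} and \ref{proposition2} guarantee that both metrics satisfy conditions \eqref{eq:dist_condition2}--\eqref{eq:dist_condition3} whenever the corresponding $d_L<d_H$ holds, the merge-based algorithm under either metric falls under the scope of Theorem \ref{theorem:KStest2}, giving
\[
P_e\leq M^{2}\bigl((T+1)a_{1}+a_{2}+(T+1)a_{3}\bigr)e^{-bn}.
\]
What remains is to read off $a_{1},a_{2},a_{3},b$ from the relevant lemmas for each metric at the chosen thresholds.

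For the KS distance I would set $d_{th}=\Sigma_{ks}/2$, so that the two gaps that drive the concentration bounds become symmetric, $d_{th}-d_{L,ks}=d_{H,ks}-d_{th}=\Delta_{ks}/2$. Feeding this choice into Lemmas \ref{lemma:KStest3}, \ref{lemma:KStest2}, and \ref{lemma:KStest4}, which are in turn based on the DKW-type two-sample inequality, I expect the constants to come out as $a_{1}=4$, $a_{2}=4$, $a_{3}=6$, and $b=\Delta_{ks}^{2}/8$ (the factor $1/8$ comes from squaring the half-gap $\Delta_{ks}/2$ in the exponent $2n(\Delta_{ks}/2)^{2}$). Substituting then yields $(T+1)\cdot 4+4+(T+1)\cdot 6=10T+14$ as the prefactor, matching $M^{2}(10T+14)\exp(-n\Delta_{ks}^{2}/8)$.

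For MMD the step is analogous with $d_{th}=\Sigma_{mmd}/2$, giving $d_{th}-d_{L,mmd}=d_{H,mmd}-d_{th}=\Delta_{mmd}/2$. The MMD concentration results in Lemmas \ref{lemma:MMD2}, \ref{lemma:MMD1}, and \ref{lemma:MMD3} are derived from a McDiarmid-style bound applied to the bounded U-statistic estimator \eqref{eq:mmdu}, and I expect them to deliver $a_{1}=a_{2}=a_{3}=1$ together with $b=\Delta_{mmd}^{2}/(256\mathbb{K}^{2})$, where the $256\mathbb{K}^{2}$ reflects the bounded differences constant of the kernel. Plugging in gives the prefactor $(T+1)+1+(T+1)=2T+3$ and the stated exponent.

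The proof is therefore almost pure bookkeeping once the appendix lemmas are in hand, and the only care point is verifying that the midpoint threshold is admissible, i.e., that $d_{L}<\Sigma/2<d_{H}$ for both metrics; this is immediate from $\Sigma=d_{H}+d_{L}$ together with $d_{L}<d_{H}$. The main obstacle, if any, lies in making sure that the constants extracted from the lemmas are uniform in $d_{th}$ on $(d_{L},d_{H})$ (as Assumption \ref{assump} requires), rather than specific to the midpoint, so that they may legitimately be invoked inside the union bound that Theorem \ref{theorem:KStest2} already performs.
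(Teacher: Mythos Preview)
Your proposal is correct and follows essentially the same approach as the paper: invoke Propositions \ref{proposition1} and \ref{proposition2} to justify applying Theorem \ref{theorem:KStest2}, then substitute the constants from Lemmas \ref{lemma:KStest2}--\ref{lemma:MMD3} at the midpoint threshold. The arithmetic you lay out for $(a_1,a_2,a_3,b)$ in each case matches exactly what the paper intends (note that for MMD, Lemma \ref{lemma:MMD3} actually gives the sharper exponent $\Delta_{mmd}^{2}/(96\mathbb{K}^{2})$, which is then relaxed to the common $b=\Delta_{mmd}^{2}/(256\mathbb{K}^{2})$).
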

\begin{proof}
By Propositions \ref{proposition1} and \ref{proposition2}, the upper bound on the error probability of Algorithm \ref{K-means-unknown-UC} in \ref{theorem:KStest2} applies to the KS distance and MMD. Thus, the corollary is obtained by substituting the values specified in Lemmas \ref{lemma:KStest2} - \ref{lemma:MMD3} in the upper bound.
\end{proof}
Corollary \ref{coro:unknownK-merge} implies that Algorithm \ref{K-means-unknown-UC} is exponentially consistent under KS and MMD distance metrics with an error exponent no smaller than $\frac{\Delta_{ks}^{2}}{8}$ and $\frac{\Delta_{mmd}^2}{256\mathbb{K}^2}$, respectively.

\subsection{Split Step}
Suppose a cluster contains sequences generated by different distributions and the center is generated from $p\in\mathcal{P}_{k}$. Then if the distance metric satisfies \eqref{eq:dist_condition2}, the probability that the distances between sequences generated from distribution clusters other than $\mathcal{P}_{k}$ and the center is small decays as the sample size increases. Therefore, it is reasonable to begin with one cluster and then split a cluster if there exists a sequence in the cluster that has a large distance to the center. The corresponding algorithm is summarized in Algorithm \ref{K-means-unknown-spl}.\par
\begin{algorithm}[!bt]
	\caption{Split-based clustering with unknown $K$}
	\label{K-means-unknown-spl}
	\begin{algorithmic}[1]
		\State \textbf{Input}: Data sequences $\{\mathbf{y}_{i}\}_{i=1}^{M}$ and threshold $d_{th}$.
		\State \textbf{Output}: Partition set $\{\mathcal{C}_{k}\}_{k=1}^{\hat{K}}$.
		\State  $\mathcal{C}_{1}=\{\mathbf{y}_{i}\}_{i=1}^{M}$, $\hat{K}=1$ and find $\mathbf{c}_{1}$ by center update specified in Algorithm \ref{K-means-known-C}.
		\While{not converge}
		\State {\{Split Step\}}
		\If {$\max_{k\in I_{1}^{\hat{K}},\: \mathbf{y}_{i}\in \mathcal{C}_{k}}d\left(\mathbf{c}_{k},\mathbf{y}_{i}\right)>d_{th}$}
		\State $\hat{K}\leftarrow \hat{K}+1$.
		\State $k=\arg\max_{k\in I_{1}^{\hat{K}}}\left(\max_{ \mathbf{y}_{i}\in \mathcal{C}_{k}}d\left(\mathbf{c}_{k},\mathbf{y}_{i}\right)\right)$
		\State $\mathbf{c}_{\hat{K}}\leftarrow\arg\max_{\mathbf{y}_{i}\in \mathcal{C}_{k}}d\left(\mathbf{c}_{k},\mathbf{y}_{i}\right)$
		\EndIf
		\State Cluster update specified in Algorithm \ref{K-means-known-C}.
		\EndWhile
		\State Return $\{\mathcal{C}_{k}\}_{k=1}^{\hat{K}}$
	\end{algorithmic}
\end{algorithm}
\begin{definition}\label{def:correct_split}
	Suppose Algorithm \ref{K-means-unknown-spl} obtains $\hat{K}$ clusters at the $t$-th iteration, where $\hat{K}<K$ and $\hat{K}=t$ or $t+1$. Then the correct clustering update result is that each cluster contains all the sequences generated from the distribution cluster that generates the center.
\end{definition}
\begin{theorem}\label{theorem:KStest3}
	Algorithm \ref{K-means-unknown-spl} converges after a finite number of iterations. Moreover, under Assumption \ref{assump}, the error probability of Algorithm \ref{K-means-unknown-spl} after $T$ iterations is upper bounded as follows
	\begin{equation*}
	P_{e}\leq M^{2}T\left(a_{1} + a_{2} + a_{3}\right)e^{-bn},
	\end{equation*}
	where $a_{1}$, $a_{2}$, $a_{3}$ and $b$ are as defined in Assumption \ref{assump}.
\end{theorem}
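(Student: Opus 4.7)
The plan is to mirror the decomposition used for Theorems \ref{theorem:KStest1} and \ref{theorem:KStest2}. I would write the event ``an error occurs during the first $T$ iterations'' as the disjoint union, over $t=1,\ldots,T$, of the events ``iteration $t$ is the first in which an error occurs'', then bound each conditional term using the three concentration inequalities in Assumption \ref{assump} and finish with a union bound. Conveniently, unlike the merge-based algorithm, there is no initialization error to account for: the algorithm starts with a single cluster containing every sequence and the only initial task is a center update, so iteration $1$ is the first place an error can be made.

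For the finite-convergence claim I would argue in two layers. Each split step strictly increases $\hat{K}$, and since $\hat{K}\le M$ at most $M-1$ splits can ever occur. Between splits (and after the last one) the algorithm reduces to the standard k-medoids inner loop on a fixed $\hat{K}$; the objective $\sum_{k}\sum_{\mathbf{y}_i\in\mathcal{C}_k}d(\mathbf{c}_k,\mathbf{y}_i)$ is strictly decreased at each non-trivial center/cluster update and takes values in a finite set, so no configuration can repeat and termination follows.

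For the probability bound, condition on the inductive hypothesis that at the start of iteration $t$ every current cluster satisfies Definition \ref{def:correct_split}, and analyze the two sub-steps. For the split decision, if $\hat{K}<K$ there is at least one distribution cluster $\mathcal{P}_{k^\ast}$ not yet represented; I need some sequence from $\mathcal{P}_{k^\ast}$ to beat the threshold against every existing center, which is controlled by \eqref{eq:dist_condition2} and contributes an $a_1$ term, and I need the argmax sequence not to come from an already-represented cluster, which follows because under the inductive hypothesis a within-cluster distance to the center exceeding the cross-cluster distance to an uncovered $\mathcal{P}_{k^\ast}$ is exactly the hypothesis of \eqref{eq:dist_condition3}. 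If $\hat{K}=K$ I instead need the algorithm \emph{not} to split, which is \eqref{eq:dist_condition1} applied to every (sequence, own-center) pair and contributes an $a_2$ term. For the cluster update, given correctly chosen centers the only way to misassign $\mathbf{y}_i\sim\mathcal{P}_k$ is $d(\mathbf{y}_i,\mathbf{c}_{k'})\le d(\mathbf{y}_i,\mathbf{c}_k)$ for some $k'\ne k$, which is \eqref{eq:dist_condition3} and contributes an $a_3$ term. Union-bounding over at most $M^2$ relevant pairs per iteration gives a per-iteration conditional error of at most $M^2(a_1+a_2+a_3)e^{-bn}$, and summing over $t=1,\ldots,T$ yields the claimed bound.

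The step I expect to require the most care is the split sub-step under the inductive hypothesis: unlike the pure cluster-update analysis, the decision depends on both a maximum of distances (for deciding to split) and an argmax over all $M$ sequences (for choosing the new center), and the ``correct'' outcome couples the two. Making the combinatorial bookkeeping produce the advertised constant $M^2T(a_1+a_2+a_3)$---rather than a slightly larger polynomial in $M$ or an extra additive $a_1+a_2$ per iteration---will require verifying that when $\hat{K}<K$ the events controlled by \eqref{eq:dist_condition2} and \eqref{eq:dist_condition1} can be absorbed into the same $M^2$ pair count, and when $\hat{K}=K$ only the $a_2$ term is activated, so that the three coefficients combine cleanly as stated. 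Beyond this bookkeeping the argument is a direct adaptation of the proof of Theorem \ref{theorem:KStest1}, and I would defer the remaining computation to an appendix in the style of Appendix \ref{proof:theorem:KStest1}.
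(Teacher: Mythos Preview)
Your plan is essentially the paper's: decompose the total error into first-error-at-iteration-$t$ events, bound each conditionally via the three events $S_1(d_{th})$, $S_2(d_{th})$, $S_3$, and sum. The convergence argument and the cluster-update bound via \eqref{eq:dist_condition3} match the paper exactly, and you are right that there is no initialization error to account for.

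The one substantive divergence is in the split-step analysis when $\hat K<K$. You plan to control ``the argmax sequence comes from an already-represented distribution cluster'' via \eqref{eq:dist_condition3}. The paper instead handles this with \eqref{eq:dist_condition1}: under the inductive hypothesis (Definition~\ref{def:correct_split}) the argmax cluster $\mathcal{C}_k$ contains only sequences from the center's own distribution cluster together with sequences from \emph{unrepresented} distribution clusters, so if the new center is ``wrong'' it must share a distribution cluster with $\mathbf{c}_k$; and since the split criterion guarantees $d(\mathbf{c}_k,\mathbf{c}_{\hat K})>d_{th}$, this is directly an $S_2(d_{th})$ event. The paper therefore packages the split-step error as $A_1^t\cup A_2^t$ with $\hat A_1^t\subset S_1(d_{th})$ (failure to split a mixed cluster) and $\hat A_2^t\subset S_2(d_{th})$ (split yielding a same-cluster center), giving $M^2(a_1+a_2)e^{-bn}$ per iteration for the split step; adding $M^2a_3e^{-bn}$ for the cluster update produces the constant with no extra bookkeeping. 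Your $S_3$ route also works, but as you suspected it would naturally produce a prefactor like $M^2T(a_1+a_2+2a_3)$ rather than the stated one; swapping in the $S_2$ observation removes the duplication.

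A minor remark on convergence: Algorithm~\ref{K-means-unknown-spl} has no center-update step inside the loop, so there is no ``standard k-medoids inner loop'' to analyze between splits. If no split occurs the centers are unchanged and the subsequent cluster update is idempotent, so the algorithm halts immediately; the paper states this directly rather than invoking the decreasing-objective argument.
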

\begin{hproof}
	An error occurs at the $t$-th iteration if and only if the $\hat{K}$-th center is generated from distribution clusters that generated the previous centers or the clustering result is incorrect. Note that the error event of the first $T$ iterations is the union of the events that an error occurs at the $t$-th iteration while the clustering results in the previous $t-1$ iterations are correct for $t=1,\ldots,T$. Similar to the proof of Theorem \ref{theorem:KStest1}, the error probability is bounded by the union bound. See Appendix \ref{proof:theorem:KStest3} for more details.
\end{hproof}
Theorem \ref{theorem:KStest3} shows that the split-based algorithm is exponentially consistent under any distance metric satisfying Assumption \ref{assump} with the error exponent $b$.
\begin{corollary}\label{coro:unknownK-split}
		Suppose the KS distance and MMD are used with $d_{th} = \frac{\Sigma_{ks}}{2}$ and $d_{th}=\frac{\Sigma_{mmd}}{2}$. Then the error probability of Algorithm \ref{K-means-unknown-spl} after $T$ iterations is upper bounded as follows
		\begin{equation*}
		\begin{aligned}
		&P_{e}^{KS}\leq 14M^{2}T\exp{\left(-\frac{n\Delta_{ks}^{2}}{8}\right)},\\
		&P_{e}^{MMD} \leq 3M^{2}T\exp{\left(-\frac{n\Delta_{mmd}^{2}}{256\mathbb{K}^{2}}\right)}.
		\end{aligned}
		\end{equation*}
\end{corollary}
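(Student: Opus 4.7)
The plan is to follow the template already used in the proofs of Corollaries \ref{coro:knownK} and \ref{coro:unknownK-merge}: first verify that the KS distance and the MMD satisfy Assumption \ref{assump} so that Theorem \ref{theorem:KStest3} is applicable, and then substitute the explicit constants $a_1, a_2, a_3$ and the exponent $b$ for each metric into the general bound $P_e \leq M^2 T(a_1 + a_2 + a_3) e^{-bn}$.

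I would begin by checking that the chosen threshold $d_{th} = \Sigma_{ks}/2$ (respectively $\Sigma_{mmd}/2$) lies strictly in the admissible interval $(d_L, d_H)$ required by Assumption \ref{assump}. Since $\Sigma = d_H + d_L$, one has $d_{th} = (d_H + d_L)/2 \in (d_L, d_H)$ whenever $d_L < d_H$, which is exactly the hypothesis \eqref{eq:KSassumptionHT}. With $d_{th}$ in this range, Propositions \ref{proposition1} and \ref{proposition2} guarantee that \eqref{eq:dist_condition2}--\eqref{eq:dist_condition3} hold for both metrics, so Theorem \ref{theorem:KStest3} applies and the bound $P_e \leq M^2 T (a_1+a_2+a_3) e^{-bn}$ is in force.

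Next, I would read off the explicit constants from the appendix lemmas invoked in the proofs of the two propositions. For the KS distance with $d_{th} = \Sigma_{ks}/2$, one has $d_H - d_{th} = d_{th} - d_L = \Delta_{ks}/2$, and Lemmas \ref{lemma:KStest3}, \ref{lemma:KStest2}, \ref{lemma:KStest4} yield constants whose sum is $a_1 + a_2 + a_3 = 14$ together with a common exponent $b = \Delta_{ks}^2/8$; this is consistent with the multipliers $6T + 14$ and $10T + 14$ found in Corollaries \ref{coro:knownK} and \ref{coro:unknownK-merge}, which forces $a_1 = a_2 = 4$ and $a_3 = 6$. For MMD with $d_{th} = \Sigma_{mmd}/2$, Lemmas \ref{lemma:MMD2}, \ref{lemma:MMD1}, \ref{lemma:MMD3} give $a_1 = a_2 = a_3 = 1$ and $b = \Delta_{mmd}^2 / (256 \mathbb{K}^2)$, again matching the $T+3$ and $2T+3$ multipliers seen previously. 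Substituting these constants into the general bound immediately produces the two stated expressions $14 M^2 T \exp(-n \Delta_{ks}^2/8)$ and $3 M^2 T \exp(-n \Delta_{mmd}^2 / (256 \mathbb{K}^2))$.

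The only real obstacle is bookkeeping rather than a new analytic step: one must verify that the exponent $b$ coming out of each of the three inequalities \eqref{eq:dist_condition2}--\eqref{eq:dist_condition3} is the same (otherwise one would need to take a minimum and lose sharpness), and that the constants from the appendix lemmas are compatible with the sums implicitly fixed by the earlier corollaries. Once that consistency is confirmed, the result is simply a specialization of Theorem \ref{theorem:KStest3}, parallel to how Corollaries \ref{coro:knownK} and \ref{coro:unknownK-merge} were obtained from Theorems \ref{theorem:KStest1} and \ref{theorem:KStest2} respectively.
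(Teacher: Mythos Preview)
Your proposal is correct and follows exactly the approach taken in the paper: invoke Propositions \ref{proposition1} and \ref{proposition2} so that Theorem \ref{theorem:KStest3} applies, then substitute the explicit constants from Lemmas \ref{lemma:KStest2}--\ref{lemma:MMD3} into the bound $P_e \le M^2 T(a_1+a_2+a_3)e^{-bn}$. Your additional verification that $d_{th}=\Sigma/2\in(d_L,d_H)$ and your bookkeeping check that the exponents from the three lemmas match (up to taking the weakest one for MMD, where Lemma \ref{lemma:MMD3} gives $n\Delta_{mmd}^2/(96\mathbb{K}^2)\ge n\Delta_{mmd}^2/(256\mathbb{K}^2)$) are more explicit than the paper's own proof but entirely in the same spirit.
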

\begin{proof}
	By Propositions \ref{proposition1} and \ref{proposition2}, the upper bound on the error probability of Algorithm \ref{K-means-unknown-spl} in Theorem \ref{theorem:KStest3} applies to the KS distance and MMD. Thus, the corollary is obtained by substituting the values specified in Lemmas \ref{lemma:KStest2} - \ref{lemma:MMD3} in the upper bound.
\end{proof}
Corollary \ref{coro:unknownK-split} implies that Algorithm \ref{K-means-unknown-spl} is exponentially consistent under KS and MMD with an error exponent no smaller than $\frac{\Delta_{ks}^{2}}{8}$ and $\frac{\Delta_{mmd}^2}{256\mathbb{K}^2}$, respectively.
\section{Numerical Results}\label{sec:numerical_result}
In this section, we provide some simulation results given $K=5$ and $M_{k}=3$ for $k=1,\ldots,5$. Moreover, $\mathbf{x}_{k,j_{k}}[i]\in \mathbb{R}$. Gaussian distributions $\mathcal{N}\left(\mu_{k,j_{k}},\sigma^{2}\right)$ and Gamma distributions $\Gamma\left(a_{k,j_{k}},b\right)$ are used in the simulations. The probability density function (p.d.f.) of $\Gamma(a,b)$ is defined as
\begin{equation*}
\begin{aligned}
f\left(x;\alpha,\beta\right) & = \frac{1}{\beta^{\alpha}\Gamma\left(\alpha\right)}x^{\alpha-1}\exp{\left(-\frac{x}{\beta}\right)}\quad \left(x>0\right),
\end{aligned}
\end{equation*}
where $\alpha>0$, $\beta>0$ and $\Gamma\left(\cdot\right)$ is the Gamma function, respectively. Specifically, the parameters of the distributions are $\mu_{k,j_{k}}\in\{k-\delta,k,k+\delta\}$, $\sigma^{2} = 1$, $\alpha_{k,j_{k}}\in\{2.5k+1-\delta,2.5k+1,2.5k+1+\delta\}$ and $\beta=1$, where $\delta =0$ and $0.1$. Note that when $\delta=0$, all the sequences generated from the same distribution cluster are generated from a single distribution. The exponential kernel function is used in the simulations for the MMD distance, i.e.,
\begin{equation}\label{eq:simu_kernel}
g\left(x,y\right) = e^{-\frac{|x-y|}{2}}.
\end{equation}

\subsection{Known Number of Clusters}
Simulation results for a known number of clusters are shown in Fig.~\ref{fig:algorithm2}. One can observe from the figures that by using both the KS distance and MMD, $\log P_{e}$ is a linear function of the sample size, i.e., $P_{e}$ is exponentially consistent. Moreover, the logarithmic slope of $P_{e}$ with respect to $n$, i.e., the quantity $-\frac{\log P_{e}}{n}$, increases as $\delta$ becomes smaller, which, in the current simulation setting, implies a larger $\Delta$. \par
Furthermore, a good distance metric for Algorithm \ref{K-means-known-C} depends on the underlying distributions. This is because the underlying distributions have different distances under the KS distance and MMD, which results in different values of $\Delta_{ks}$ and $\Delta_{mmd}$.

	\begin{figure}[t]
		\centering
		\subfloat[Gaussian distributions\label{subfig:1-1}]{%
			\includegraphics[width=0.5\textwidth, height=0.15\textheight]{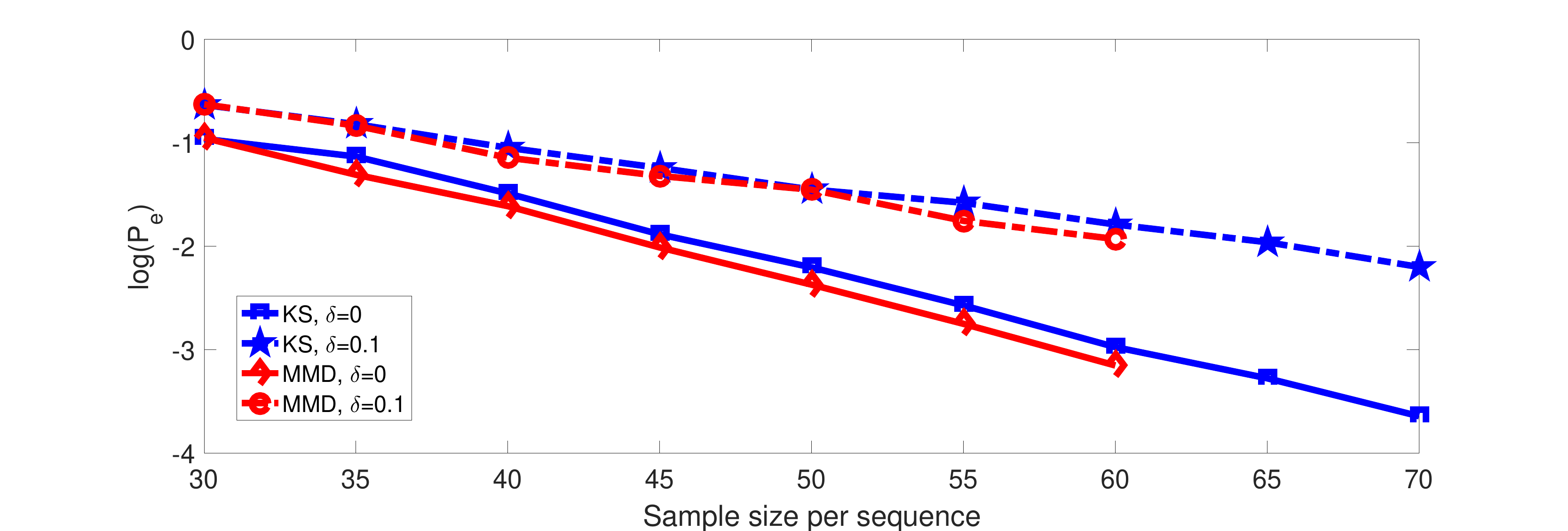}
		}
		\hfill
		\subfloat[Gamma distributions \label{subfig:1-2}]{%
			\includegraphics[width=0.5\textwidth, height=0.15\textheight]{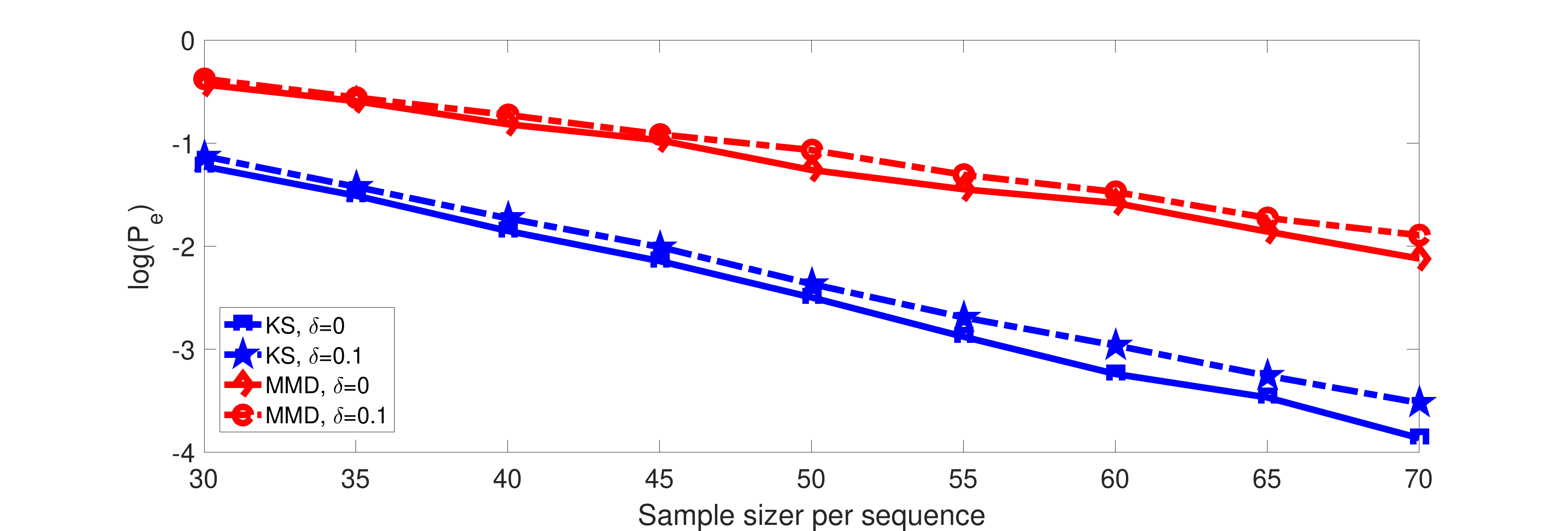}
		}
		\caption{Performance of Algorithm  2}
		\label{fig:algorithm2}
	\end{figure}


\subsection{Unknown Number of Clusters}
With an unknown number of distribution clusters, the threshold $d_{th}$ specified in Corollaries \ref{coro:unknownK-merge} and \ref{coro:unknownK-split} are used in the simulation. The performance of Algorithms \ref{K-means-unknown-UC} and \ref{K-means-unknown-spl} for the KS distance and MMD are shown in Figs.~\ref{fig:algorithm4_5_KS} and \ref{fig:algorithm4_5_MMD}, respectively. Given the KS distance and MMD, $\log {P_{e}}$'s are linear functions of the sample size when the sample size is large and larger $\Delta$ implies a larger slope of $\log {P_{e}}$. Furthermore, given the same value of $\delta$, Algorithms \ref{K-means-unknown-UC} and \ref{K-means-unknown-spl} have similar performance under the KS distance whereas Algorithm \ref{K-means-unknown-spl} outperforms Algorithm \ref{K-means-unknown-UC} under MMD given $\delta=0$ and $0.1$. Intuitively, smaller $\delta$ implies larger $\Delta$ in the current simulation setting, thereby should result in better clustering performance for a given sample size. However, Fig.~\ref{fig:algorithm4_5_KS}\subref{fig:algorithm4_5_KS-b} indicates that Algorithms \ref{K-means-unknown-UC} and \ref{K-means-unknown-spl} with KS distance performs better with $\delta=0.1$ than that with $\delta=0$ when the sample size is small. This is likely due to the fact that the KS distance between the two sequences is always lower bounded by $\frac{1}{n}$. Thus, with small sample sizes, Algorithms \ref{K-means-unknown-UC} and \ref{K-means-unknown-spl} are likely to overestimate the number of clusters. This can be mitigated by the increased threshold $d_{th}$ to control merging/splitting of cluster centers. 

\begin{figure}[!htb]
	\subfloat[Gaussian distributions\label{subfig:2-1}]{%
		\includegraphics[width=0.5\textwidth, height=0.15\textheight]{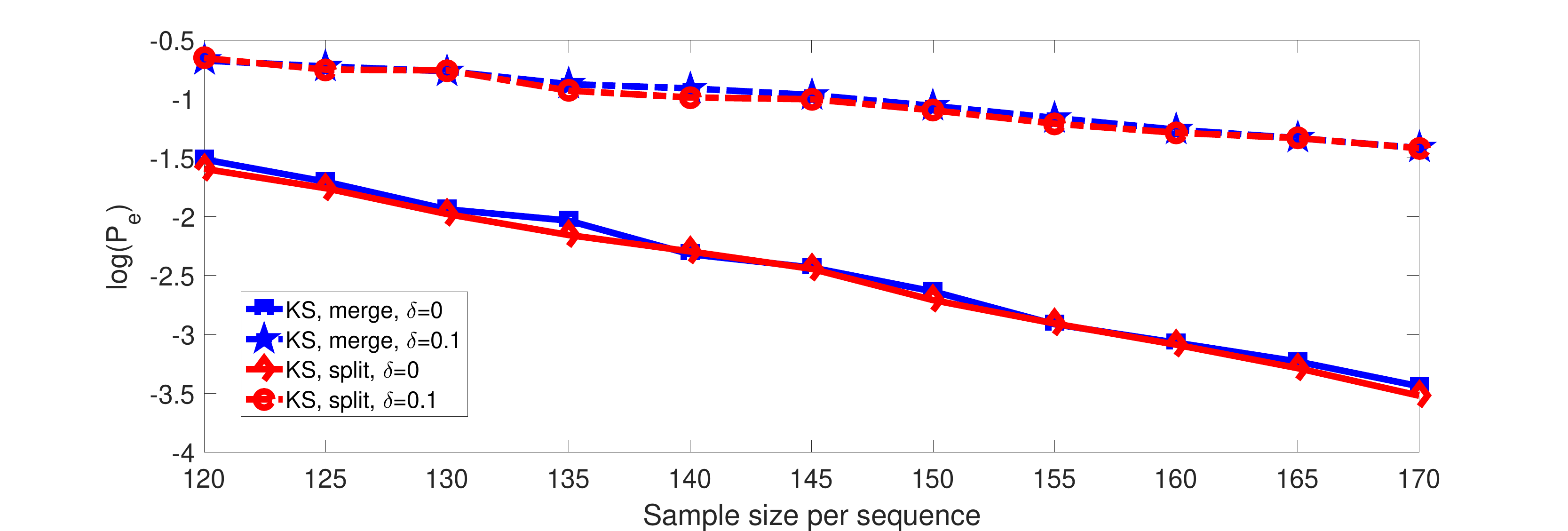}
	}
	\hfill
	\subfloat[Gamma distributions \label{subfig:2-2}]{%
		\includegraphics[width=0.5\textwidth, height=0.15\textheight]{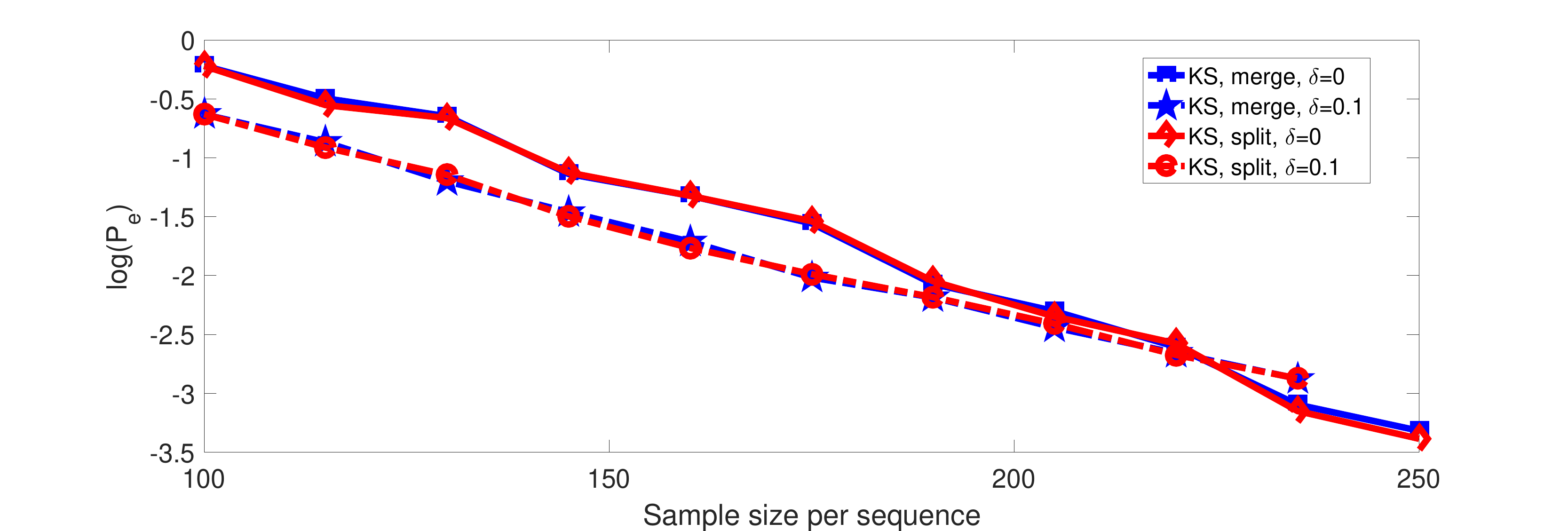}
		\label{fig:algorithm4_5_KS-b}
	}
	\caption{Performance of Algorithms \ref{K-means-unknown-UC} and \ref{K-means-unknown-spl} for the KS distance}
	\label{fig:algorithm4_5_KS}
\end{figure}

\begin{figure}[!htb]
	\subfloat[Gaussian distributions\label{subfig:3-1}]{%
		\includegraphics[width=0.5\textwidth, height=0.15\textheight]{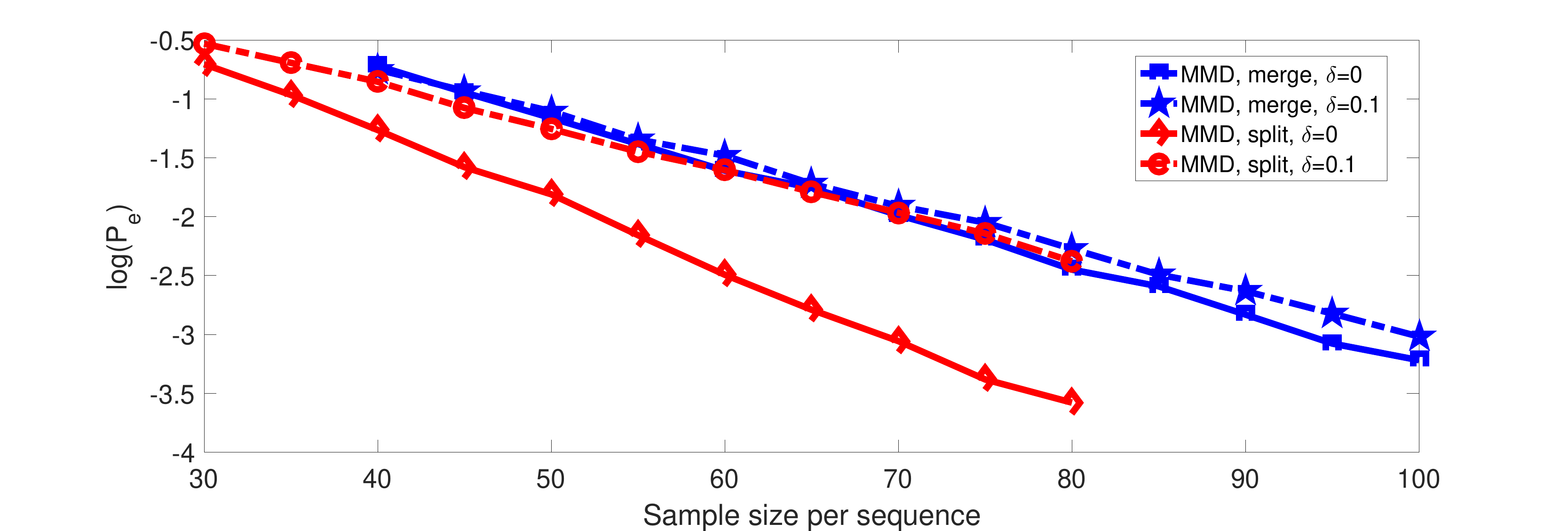}
	}
	\hfill
	\subfloat[Gamma distributions\label{subfig:3-2}]{%
		\includegraphics[width=0.5\textwidth, height=0.15\textheight]{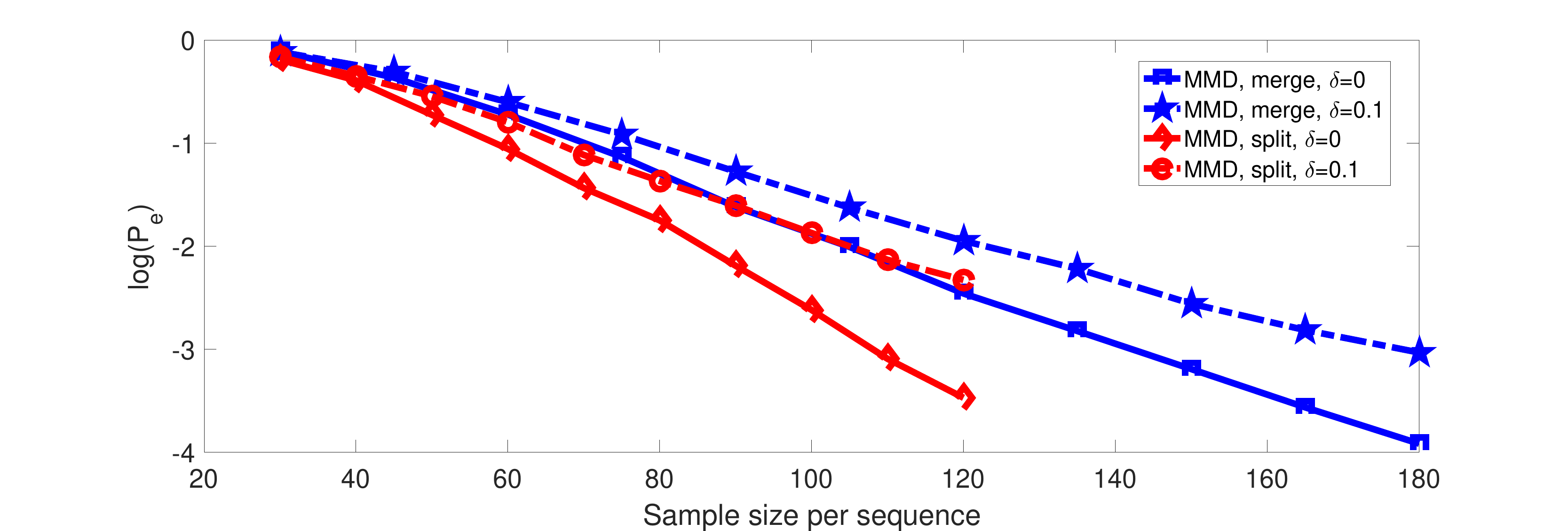}
		\label{fig:algorithm4_5_MMD-b}
	}
	\caption{Performance of Algorithms \ref{K-means-unknown-UC} and \ref{K-means-unknown-spl} for MMD}
	\label{fig:algorithm4_5_MMD}
\end{figure}
\subsection{Choice of $d_{th}$}
Note that in general $d_{th}=\omega d_{L} + \left(1-\omega\right)d_{H}$, where $\omega\in\left(0,1\right)$. Theorems \ref{theorem:KStest2} and \ref{theorem:KStest3} only establish the exponential consistency of Algorithms \ref{K-means-unknown-UC} and \ref{K-means-unknown-spl}, respectively. We now investigate the impact on performance given different $\omega$'s. One can observe from Fig.~\ref{fig:algorithm4_5_alpha} that the choice of $d_{th}$ has a significant impact on the performance of Algorithms \ref{K-means-unknown-UC} and \ref{K-means-unknown-spl}. The optimal $d_{th}$ depends on both the value of $\delta$ and the underlying distributions. Moreover, from Fig.~\ref{fig:algorithm4_5_alpha}\subref{fig:algorithm4_5_alpha_KS-a}-\subref{fig:algorithm4_5_alpha_KS-b}, we can see that a smaller $\omega$ which implies larger $d_{th}$ results in better performance for KS distance and the two algorithms always have similar performance. On the other hand, from Fig.~\ref{fig:algorithm4_5_alpha}\subref{fig:algorithm4_5_alpha_MMD-a}-\subref{fig:algorithm4_5_alpha_MMD-b}, we can see that $\omega=0.5$ is a good choice for MMD if $P_{e}$ is of interest and \eqref{eq:simu_kernel} is used as the kernel function. Moreover, when $\omega$ is small, Algorithm \ref{K-means-unknown-spl} outperforms Algorithm \ref{K-means-unknown-UC} under MMD.

\begin{figure}[t]
	\subfloat[KS distance ($n=100$, $\delta=0$)\label{subfig:4-1}]{%
		\includegraphics[width=0.5\textwidth, height=0.15\textheight]{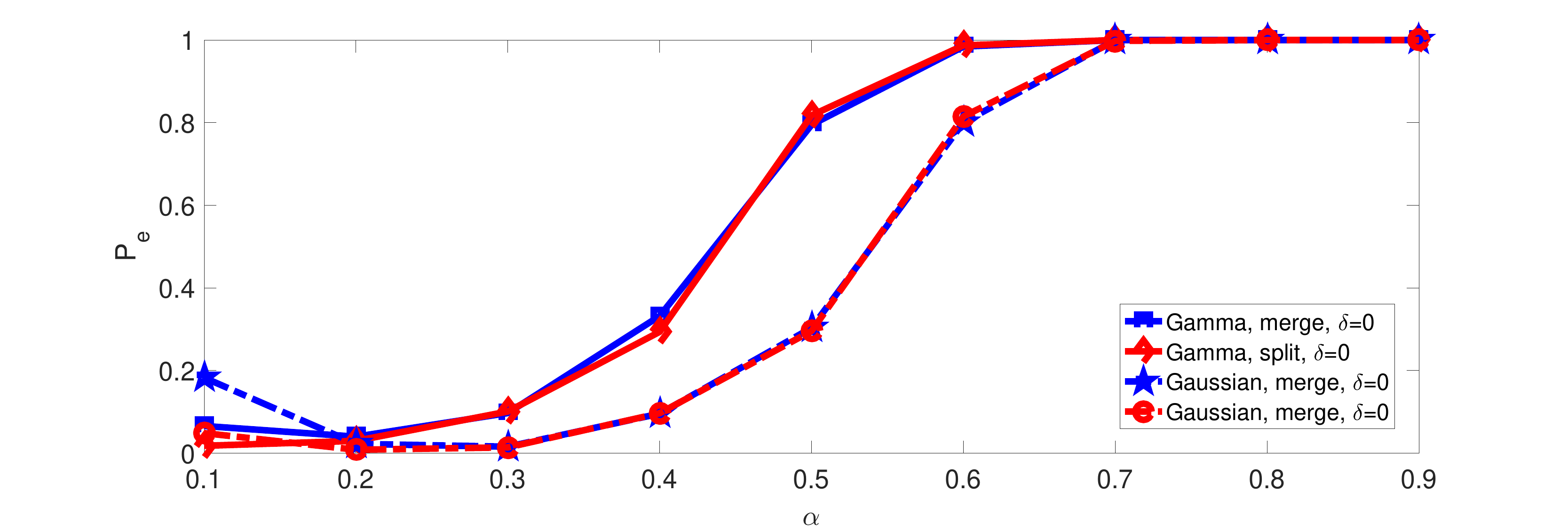}
		\label{fig:algorithm4_5_alpha_KS-a}
	}
	\hfill
	\subfloat[KS distance ($n=100$, $\delta=0.1$)\label{subfig:4-2}]{%
		\includegraphics[width=0.5\textwidth, height=0.15\textheight]{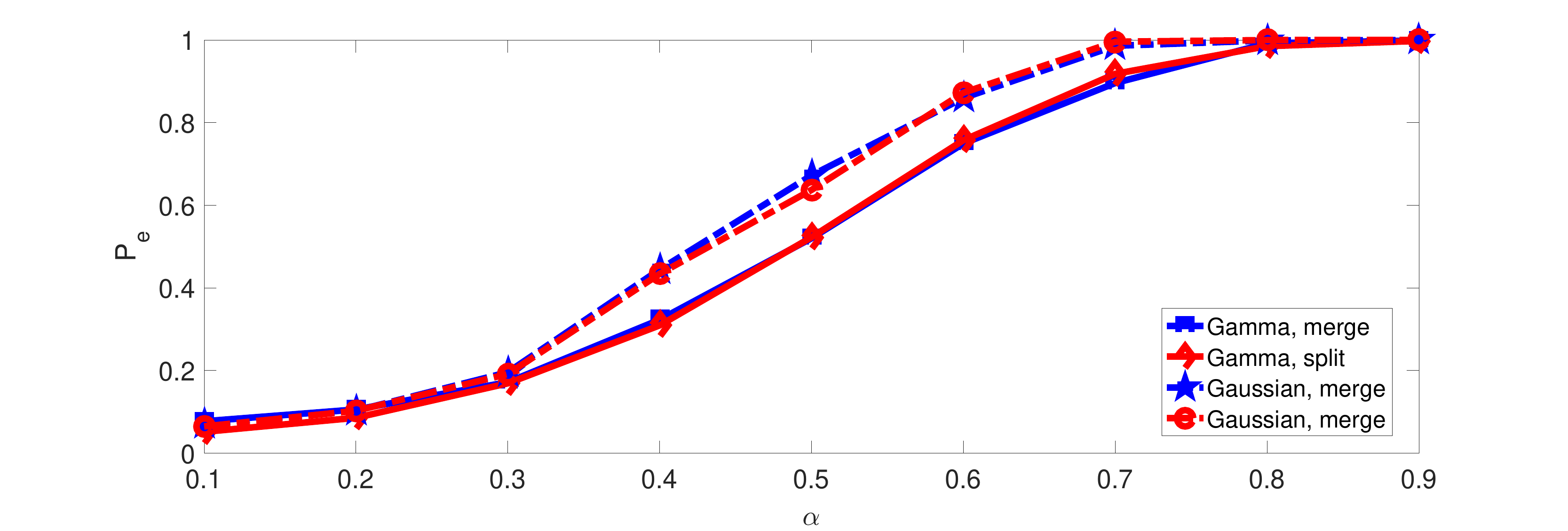}
		\label{fig:algorithm4_5_alpha_KS-b}
	}
	\hfill
	\subfloat[MMD ($n=80$, $\delta=0$)\label{subfig:4-3}]{%
		\includegraphics[width=0.5\textwidth, height=0.15\textheight]{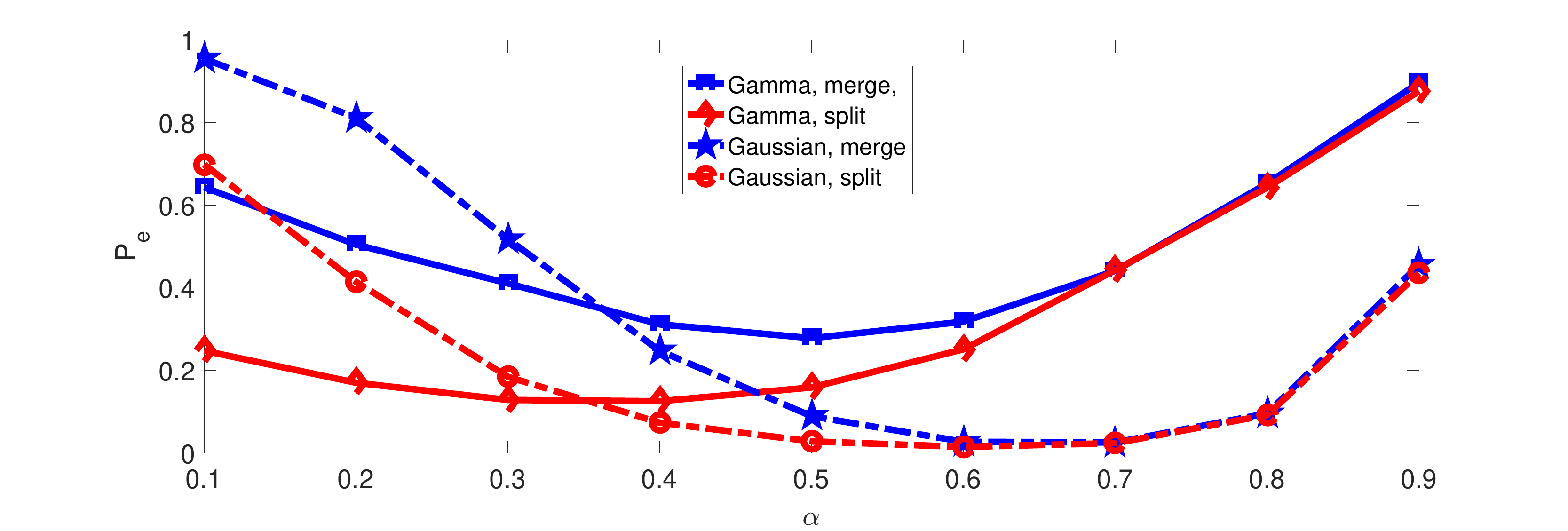}
		\label{fig:algorithm4_5_alpha_MMD-a}
	}
	\hfill
	\subfloat[MMD ($n=80$, $\delta=0.1$)\label{subfig:4-4}]{%
		\includegraphics[width=0.5\textwidth, height=0.15\textheight]{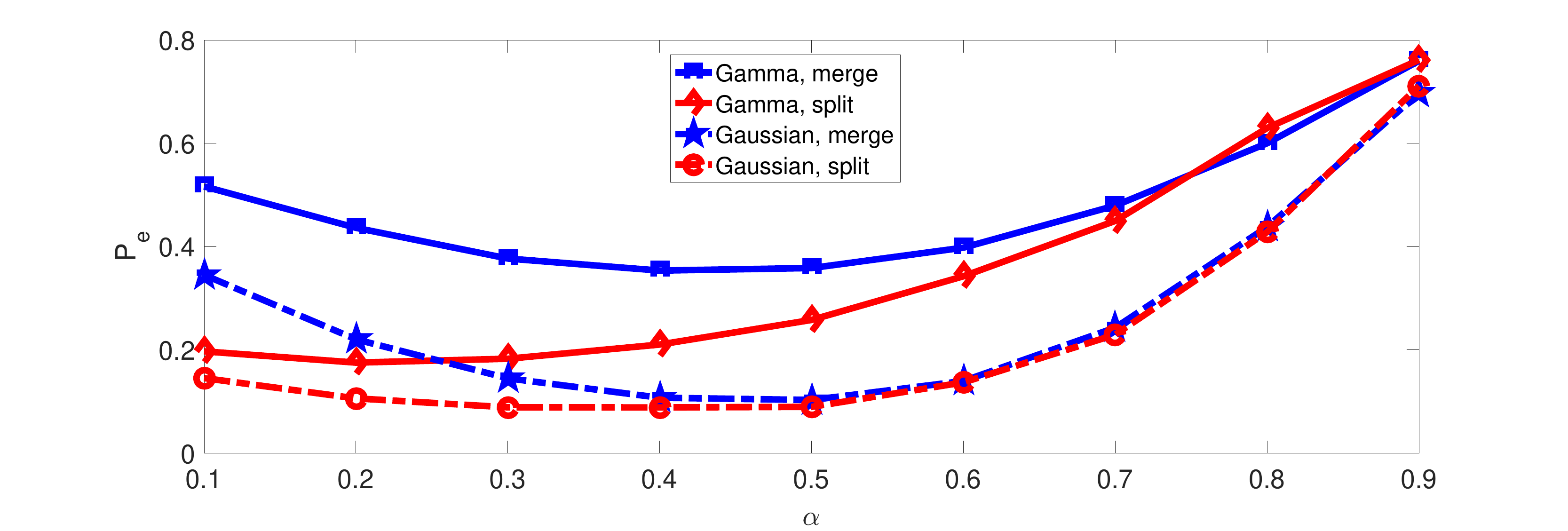}
		\label{fig:algorithm4_5_alpha_MMD-b}
	}
	\caption{Performance of Algorithms \ref{K-means-unknown-UC} and \ref{K-means-unknown-spl} given different $\omega$}
	\label{fig:algorithm4_5_alpha}
\end{figure}

\section{Conclusion}
This paper studied the k-medoids algorithm for clustering data sequences generated from composite distributions. The convergence of the proposed algorithms and the upper bound on the error probability were analyzed for both known and unknown number of clusters. The error probability of the proposed algorithms were characterized to decay exponentially for distance metrics that satisfied certain properties.
In particular, the KS distance and MMD were shown to satisfy the required condition, and hence the corresponding algorithms were exponentially consistent.\par
One possible generalization of the current work is to investigate the exponential consistency of other clustering algorithms given distributional distance metrics that satisfy the properties similar to that in Assumption \ref{assump}.



%



\appendix

\subsection{Technical Lemmas}\label{app:lemmas}
The following technical lemmas are used to prove Corollaries \ref{coro:knownK}, \ref{coro:unknownK-merge} and \ref{coro:unknownK-split}. All the data sequences in Lemmas \ref{lemma:KStest2} - \ref{lemma:MMD3} are assumed to consist of $n$ i.i.d. samples.
\begin{lemma}\emph{[Dvoretzky-Kiefer-Wolfowitz Inequality\cite{Massart1990}]}\label{lemma:KStest1}
	Suppose $\mathbf{x}$ consists of $n$ i.i.d. samples generated from $p$. Then
	\begin{equation*}
	P\big(d_{KS}\left(\mathbf{x},p\right)>\epsilon\big)\leq 2\exp\left(-2n\epsilon^{2}\right).
	\end{equation*}
\end{lemma}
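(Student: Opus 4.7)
The plan is to prove the DKW-type bound by combining pointwise Hoeffding concentration with a discretization argument that exploits the monotonicity of both the empirical and population CDFs.

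First, I would establish pointwise concentration at a fixed $a\in\mathbb{R}$. Note $nF_{\mathbf{x}}(a)=\sum_{i=1}^n \mathbf{1}_{[-\infty,a]}(\mathbf{x}[i])$ is a sum of $n$ i.i.d.\ Bernoulli variables with mean $F_p(a)$, so Hoeffding's inequality yields $P\bigl(|F_{\mathbf{x}}(a)-F_p(a)|>\epsilon\bigr)\leq 2\exp(-2n\epsilon^2)$ for every fixed $a$. This already gives the right exponent at a single point; the remaining work is to upgrade the pointwise statement to a supremum.

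Next, I would discretize. Choose a finite grid $-\infty=a_0<a_1<\cdots<a_K=+\infty$ with $F_p(a_j)=j/K$ (using generalized inverses of $F_p$ to handle atoms). For any $a\in[a_j,a_{j+1})$, monotonicity of $F_{\mathbf{x}}$ and $F_p$ gives
\[
F_{\mathbf{x}}(a)-F_p(a)\leq F_{\mathbf{x}}(a_{j+1})-F_p(a_{j+1})+\tfrac{1}{K},
\]
with an analogous lower bound, so the global KS deviation is controlled by the maximum deviation at the $K+1$ grid points plus a $1/K$ discretization error. A union bound then produces $P(d_{KS}(\mathbf{x},p)>\epsilon)\leq 2(K+1)\exp\bigl(-2n(\epsilon-1/K)^2\bigr)$, and taking $K$ large (e.g.\ $K=\lceil 2/\epsilon\rceil$ or $K$ growing polynomially in $n$) recovers an exponential rate arbitrarily close to $2n\epsilon^2$.

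The main obstacle is the sharp constant. The crude union bound above loses a polynomial prefactor in $K$, whereas the inequality as stated has the clean constant $2$ with no polynomial factor. Obtaining this optimal form is Massart's contribution \cite{Massart1990} and relies on a substantially more delicate argument: a reduction to a Brownian-bridge-type process via an exponential martingale associated to the empirical process, combined with a reflection/truncation step that removes the polynomial loss incurred by naive chaining. For the purposes of this paper, where only the exponential consistency and the exponent matter (polynomial prefactors can always be absorbed by infinitesimally shrinking $\epsilon$), the simpler discretization proof suffices; the sharp constant $2$ is invoked by direct appeal to Massart's theorem.
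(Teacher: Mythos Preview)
The paper does not prove this lemma at all: it is stated as a known result with a citation to Massart~\cite{Massart1990} and used as a black box in the proofs of Lemmas~\ref{lemma:KStest2} and~\ref{lemma:KStest3}. Your proposal ultimately does the same thing---after sketching the standard discretization-plus-union-bound argument that yields the right exponent with a suboptimal polynomial prefactor, you correctly observe that the sharp constant $2$ requires Massart's martingale-based refinement and invoke his theorem directly. So your endpoint matches the paper's treatment exactly; the discretization sketch is extra exposition the paper omits. There is no gap, and nothing to compare in terms of approach since the paper offers none.
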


\begin{lemma}\emph{[McDiarmid’s Inequality\cite{mcdiarmid1989}]}\label{lemma:mcd}
	Consider independent random variables $\mathbf{x}[1],\ldots,\mathbf{x}[n]\in \mathcal{X}$ and a mapping $f: \mathcal X^n \rightarrow \mathbb R$. If for all $i\in \{1,\ldots, n\}$, and for any $\tilde{x}\in \mathcal{X}$, there exist $c_i \le \infty$ for which
	\begin{equation*}
	\sup \limits_{\mathbf{x}\in \mathcal X^n , \tilde{x} \in \mathcal{X} } |f\left(\mathbf{x}\right)-f\left(\mathbf{x}^{\prime}\right)|\le c_i,
	\end{equation*}
	where 
	\begin{equation*}
	\mathbf{x}^{\prime}[j] = \begin{cases}
	\tilde{x} & \text{if } j=i,\\
	\mathbf{x}[j] & \text{otherwise},
	\end{cases}
	\end{equation*}
	then for all probability measure $p$ and every $\epsilon>0$,
	\begin{equation*}
	P\big(f\left(\mathbf{x}\right)-\mathbb E_\mathbf{x}[f\left(\mathbf{x}\right)] \geq \epsilon\big)< \exp\left(-\frac{2\epsilon^2}{\sum_{i=1}^{n}c_i^2}\right),	
	\end{equation*}
	where $\mathbb E_X[\cdot]$ denotes the expectation over the $n$ random variables $x_i \sim p$.
\end{lemma}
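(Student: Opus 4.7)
The plan is to prove this via the classical martingale approach (Azuma--Hoeffding applied to a Doob martingale). First I would construct the Doob martingale $Z_i = \mathbb{E}[f(\mathbf{x}) \mid \mathbf{x}[1],\ldots,\mathbf{x}[i]]$ for $i = 0, 1, \ldots, n$, so that $Z_0 = \mathbb{E}_{\mathbf{x}}[f(\mathbf{x})]$ and $Z_n = f(\mathbf{x})$, giving the telescoping representation
\begin{equation*}
f(\mathbf{x}) - \mathbb{E}_{\mathbf{x}}[f(\mathbf{x})] \;=\; \sum_{i=1}^{n} D_i, \qquad D_i := Z_i - Z_{i-1}.
\end{equation*}
By construction $\{Z_i\}$ is a martingale with respect to the filtration generated by $\mathbf{x}[1],\ldots,\mathbf{x}[i]$, so $\mathbb{E}[D_i \mid \mathbf{x}[1],\ldots,\mathbf{x}[i-1]] = 0$.

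Next I would use the bounded differences hypothesis to show that, conditionally on $\mathbf{x}[1],\ldots,\mathbf{x}[i-1]$, the random variable $D_i$ lies almost surely in an interval of length at most $c_i$. The key idea is to write $D_i$ as a difference of conditional expectations in which only the $i$-th coordinate varies; since swapping the $i$-th coordinate changes $f$ by at most $c_i$ by hypothesis, the essential supremum and essential infimum of $D_i$ (conditional on the past) differ by at most $c_i$. Combined with Hoeffding's lemma for bounded zero-mean random variables, this yields the conditional moment generating function bound
\begin{equation*}
\mathbb{E}\bigl[e^{\lambda D_i} \,\big|\, \mathbf{x}[1],\ldots,\mathbf{x}[i-1]\bigr] \;\le\; \exp\!\left(\frac{\lambda^2 c_i^2}{8}\right) \quad \text{for all } \lambda \in \mathbb{R}.
\end{equation*}

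Iterating this bound via the tower property (conditioning on $\mathbf{x}[1],\ldots,\mathbf{x}[n-1]$, then $\mathbf{x}[1],\ldots,\mathbf{x}[n-2]$, and so on) gives $\mathbb{E}\bigl[e^{\lambda(f(\mathbf{x}) - \mathbb{E}[f(\mathbf{x})])}\bigr] \le \exp\!\bigl(\lambda^2 \sum_{i=1}^{n} c_i^2 / 8\bigr)$. A Chernoff-style argument (Markov's inequality applied to $e^{\lambda(\cdot)}$) then gives, for every $\lambda > 0$,
\begin{equation*}
P\bigl(f(\mathbf{x}) - \mathbb{E}_{\mathbf{x}}[f(\mathbf{x})] \ge \epsilon\bigr) \;\le\; \exp\!\left(-\lambda \epsilon + \frac{\lambda^2}{8}\sum_{i=1}^{n} c_i^2\right).
\end{equation*}
Finally I would optimize over $\lambda$ by choosing $\lambda = 4\epsilon / \sum_{i} c_i^2$, producing the stated exponent $2\epsilon^2 / \sum_{i} c_i^2$.

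The main obstacle I expect is the rigorous justification that the conditional range of $D_i$ is at most $c_i$: this requires arguing that $D_i$ can be represented as $\mathbb{E}[g_i(\mathbf{x}[i]) \mid \mathbf{x}[1],\ldots,\mathbf{x}[i-1]] - g_i(\mathbf{x}[i])$ type expressions, appealing to the independence of $\mathbf{x}[i]$ from the earlier coordinates so that integrating out $\mathbf{x}[i+1],\ldots,\mathbf{x}[n]$ preserves the $c_i$-Lipschitz property in the $i$-th argument. Everything else (Hoeffding's lemma, the Chernoff bound, and the optimization in $\lambda$) is routine once that step is in place.
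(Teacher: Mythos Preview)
Your proposal is correct and follows the classical Doob-martingale/Azuma--Hoeffding argument, which is exactly the proof given in McDiarmid's original paper~\cite{mcdiarmid1989}. Note, however, that the present paper does not supply its own proof of this lemma: it is simply quoted from the literature as a technical tool, so there is no in-paper proof to compare against.
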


Lemmas \ref{lemma:KStest2} - \ref{lemma:MMD3} establish that the KS distance and MMD satisfy Assumption \ref{assump}. Moreover, the lemmas provided in \cite{Wang:ICASSP2018} are special cases of Lemmas \ref{lemma:KStest2}, \ref{lemma:KStest3} and \ref{lemma:KStest4} with $d_{L}=0$.
\begin{lemma}\label{lemma:KStest2}
	Suppose $\mathbf{x}_{j}\sim p_{j}$ for $j=1,2$, where $p_{j}\in \mathcal{P}$ and $d_{KS}\left(\mathcal{P}\right)=d_{L,ks}$. Then for any $d_{0}\in\left(d_{L,ks},\infty\right)$,
	\begin{equation*}
	P\big(d_{KS}\left(\mathbf{x}_{1},\mathbf{x}_{2}\right)>d_{0}\big) \leq 4\exp{\left(-\frac{n(d_{0}-d_{L,ks})^{2}}{2}\right)}.
	\end{equation*}
\end{lemma}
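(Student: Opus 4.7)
The plan is to reduce the two‑sample statement to a one‑sample statement via the triangle inequality for the KS distance and then apply the DKW inequality (Lemma \ref{lemma:KStest1}). The starting observation is that because $p_1,p_2\in\mathcal{P}$ and $d_{KS}(\mathcal{P})=d_{L,ks}$ is defined as the supremum of pairwise distances within the class, we have the deterministic bound $d_{KS}(p_1,p_2)\le d_{L,ks}$.

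Next I would write, using the triangle inequality for $d_{KS}$,
\begin{equation*}
d_{KS}(\mathbf{x}_1,\mathbf{x}_2)\;\le\;d_{KS}(\mathbf{x}_1,p_1)+d_{KS}(p_1,p_2)+d_{KS}(p_2,\mathbf{x}_2)\;\le\;d_{KS}(\mathbf{x}_1,p_1)+d_{KS}(\mathbf{x}_2,p_2)+d_{L,ks}.
\end{equation*}
Hence the event $\{d_{KS}(\mathbf{x}_1,\mathbf{x}_2)>d_0\}$ is contained in $\{d_{KS}(\mathbf{x}_1,p_1)+d_{KS}(\mathbf{x}_2,p_2)>d_0-d_{L,ks}\}$, which in turn is contained in the union
\begin{equation*}
\left\{d_{KS}(\mathbf{x}_1,p_1)>\tfrac{d_0-d_{L,ks}}{2}\right\}\cup\left\{d_{KS}(\mathbf{x}_2,p_2)>\tfrac{d_0-d_{L,ks}}{2}\right\}.
\end{equation*}
Note that $d_0>d_{L,ks}$ is exactly what makes the threshold $(d_0-d_{L,ks})/2$ positive, so the subsequent bound is nontrivial.

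Finally I would apply a union bound to the two events above and invoke Lemma \ref{lemma:KStest1} (DKW) on each summand with $\epsilon=(d_0-d_{L,ks})/2$, yielding
\begin{equation*}
P\big(d_{KS}(\mathbf{x}_j,p_j)>\tfrac{d_0-d_{L,ks}}{2}\big)\;\le\;2\exp\!\left(-\tfrac{n(d_0-d_{L,ks})^2}{2}\right),\qquad j=1,2,
\end{equation*}
so the combined bound is $4\exp\!\big(-n(d_0-d_{L,ks})^2/2\big)$, as claimed. There is no serious obstacle here; the only subtlety worth flagging is that the triangle inequality is used simultaneously with KS distances between empirical c.d.f.'s and true c.d.f.'s, which is valid because $d_{KS}$ is defined uniformly as a sup‑norm on c.d.f.'s and hence is a genuine metric on the space containing both empirical and true c.d.f.'s. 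The rest is just bookkeeping of the two constants $4$ and $1/2$ that appear in the statement.
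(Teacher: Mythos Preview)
Your proposal is correct and follows essentially the same route as the paper: triangle inequality for $d_{KS}$, then a union bound, then the DKW inequality (Lemma~\ref{lemma:KStest1}) applied to each one-sample term. Your version is in fact slightly cleaner: you use $d_{KS}(p_1,p_2)\le d_{L,ks}$ directly, whereas the paper inserts an auxiliary $d_1\in(d_{L,ks},d_0)$, bounds by $4\exp(-n(d_0-d_1)^2/2)$, and then lets $d_1\downarrow d_{L,ks}$ via continuity; since the supremum defining $d_{L,ks}$ already gives the non-strict inequality, that limiting step is unnecessary.
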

\begin{proof}
	Consider
	\begin{equation*}
	\begin{aligned}
	&\quad\: P\big(d_{KS}(\mathbf{x}_{1},\mathbf{x}_{2}) > d_{0}\big)\\
	&\leq P\big(d_{KS}(\mathbf{x}_{1},p_{1}) +d_{KS}(p_{1},p_{2}) + d_{KS}(\mathbf{x}_{2},p_{2})> d_{0}\big)\\
	& \leq P\big( d_{KS}(\mathbf{x}_{1},p_{1}) +d_{1} + d_{KS}(\mathbf{x}_{2},p_{2})> d_{0}\big) \\
	& \leq P\bigg(d_{KS}(\mathbf{x}_{1},p_{1}) > \frac{\hat{d}}{2} \bigg) + P\bigg(d_{KS}(\mathbf{x}_{2},p_{2}) > \frac{\hat{d}}{2} \bigg) \\
	& \leq 4 \exp{\big(-\frac{n\hat{d}^{2}}{2}\big)},
	\end{aligned}
	\end{equation*}
	where $d_{L,ks}<d_{1}<d_{0}$, $\hat{d}=d_{0}-d_{1}$ and $\lim_{d_{1}\downarrow d_{L,ks}} = d_{0} - d_{L,ks}$. The first inequality is due to the triangle inequality of the $L_{1}$-norm and the property of the supremum, and the last inequality is due to Lemma \ref{lemma:KStest1}. Therefore, by the continuity of the exponential function, we have
	\begin{equation*}
	P\big(d_{KS}(\mathbf{x}_{1},\mathbf{x}_{2})>d_{0}\big) \leq 4\exp{\bigg(-\frac{n(d_{0}-d_{L,ks})^{2}}{2}\bigg)}.\qedhere
	\end{equation*}
\end{proof}
Lemma \ref{lemma:KStest2} implies that the KS distance satisfies \eqref{eq:dist_condition1} for $d_{th}\in (d_{L,ks},\infty)$.

\begin{lemma}\label{lemma:MMD1}
	Suppose $\mathbf{x}_{j}\sim p_{j}$ for $j=1,2$, where $p_{j}\in \mathcal{P}$ and $\mathbb{MMD}^{2}(\mathcal{P})=d_{L,mmd}$. Then for any $d_{0}\in(d_{L,mmd},\infty)$,
	\begin{equation*}
	P\big(\mathbb{MMD}^{2}(\mathbf{x}_{1},\mathbf{x}_{2})>d_{0}\big)\leq \exp{\bigg(-\frac{n(d_{0}-d_{L,mmd})^{2}}{64\mathbb{K}^{2}}\bigg)}.
	\end{equation*}
\end{lemma}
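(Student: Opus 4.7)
The plan is to mirror the structure of the KS-distance proof (Lemma~\ref{lemma:KStest2}), but with McDiarmid's inequality replacing Dvoretzky--Kiefer--Wolfowitz, since MMD lacks a direct empirical-distribution analogue. First, I would use the diameter assumption: because $p_{1},p_{2}\in\mathcal{P}$ and $\mathbb{MMD}^{2}(\mathcal{P})=d_{L,mmd}$, we have $\mathbb{MMD}^{2}(p_{1},p_{2})\le d_{L,mmd}$. Next, I would recall that the estimator in \eqref{eq:mmdu} is unbiased, so $\mathbb{E}[\mathbb{MMD}^{2}(\mathbf{x}_{1},\mathbf{x}_{2})]=\mathbb{MMD}^{2}(p_{1},p_{2})\le d_{L,mmd}$. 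Hence
\begin{equation*}
P\bigl(\mathbb{MMD}^{2}(\mathbf{x}_{1},\mathbf{x}_{2})>d_{0}\bigr)\le P\bigl(\mathbb{MMD}^{2}(\mathbf{x}_{1},\mathbf{x}_{2})-\mathbb{E}[\mathbb{MMD}^{2}(\mathbf{x}_{1},\mathbf{x}_{2})]>d_{0}-d_{L,mmd}\bigr),
\end{equation*}
which reduces the problem to a concentration statement about the unbiased estimator around its mean.

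Second, I would view $f(\mathbf{x}_{1},\mathbf{x}_{2}):=\mathbb{MMD}^{2}(\mathbf{x}_{1},\mathbf{x}_{2})$ as a function of $2n$ independent arguments (the samples of $\mathbf{x}_{1}$ and the samples of $\mathbf{x}_{2}$; note $m=n$ here) and verify the bounded differences condition of Lemma~\ref{lemma:mcd}. Using $0\le g\le \mathbb{K}$, replacing a single coordinate $\mathbf{x}_{1}[k]$ alters $2(n-1)$ terms of weight $\tfrac{1}{n(n-1)}$ in the first sum of \eqref{eq:mmdu} and $n$ terms of weight $\tfrac{2}{n^{2}}$ in the cross sum, each by at most $2\mathbb{K}$. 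The same bound holds for a coordinate of $\mathbf{x}_{2}$ by symmetry. This yields a uniform constant $c_{i}\le \tfrac{8\mathbb{K}}{n}$ for each of the $2n$ arguments, so $\sum_{i}c_{i}^{2}\le 2n\cdot\bigl(\tfrac{8\mathbb{K}}{n}\bigr)^{2}=\tfrac{128\mathbb{K}^{2}}{n}$.

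Third, I would invoke McDiarmid's inequality with $\epsilon=d_{0}-d_{L,mmd}>0$ to obtain
\begin{equation*}
P\bigl(f(\mathbf{x}_{1},\mathbf{x}_{2})-\mathbb{E}[f(\mathbf{x}_{1},\mathbf{x}_{2})]\ge \epsilon\bigr)\le \exp\!\Bigl(-\tfrac{2\epsilon^{2}}{128\mathbb{K}^{2}/n}\Bigr)=\exp\!\Bigl(-\tfrac{n(d_{0}-d_{L,mmd})^{2}}{64\mathbb{K}^{2}}\Bigr),
\end{equation*}
which combined with the reduction above completes the proof.

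The main obstacle is the bookkeeping in the bounded-differences step: one must carefully track how each of the three sums in \eqref{eq:mmdu} responds to a single-coordinate perturbation, keep the weights $\tfrac{1}{n(n-1)}$ and $\tfrac{2}{n^{2}}$ straight, and use a uniform per-term change of $2\mathbb{K}$ (from $|g-g'|\le 2\mathbb{K}$ under $0\le g\le\mathbb{K}$) to land exactly on the constant $64\mathbb{K}^{2}$ in the denominator. Everything else — unbiasedness, the reduction to a deviation-from-mean event, and the final invocation of Lemma~\ref{lemma:mcd} — is routine.
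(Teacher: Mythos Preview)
Your proposal is correct and follows essentially the same approach as the paper: both reduce to a deviation-from-mean event via unbiasedness and the diameter bound $\mathbb{MMD}^{2}(p_{1},p_{2})\le d_{L,mmd}$, then establish the bounded-differences constant $c_{i}\le \tfrac{8\mathbb{K}}{n}$ over the $2n$ independent samples and apply Lemma~\ref{lemma:mcd}. The paper's proof is slightly terser in the bounded-differences bookkeeping but the argument and constants are identical.
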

\begin{proof}
	Without loss of generality, for any $\gamma\in\{1,\ldots,n\}$, consider $\mathbf{x}_{2}^{\prime}$, where
	\begin{equation*}
	\mathbf{x}_{2}^{\prime}[i] = \begin{cases}
	\tilde{x} & \text{if }i=\gamma,\\
	\mathbf{x}_{2}[i] & \text{otherwise}.
	\end{cases}
	\end{equation*}
	Note that given $\mathbf{x}_{1}$, $\mathbf{x}_{2}$ and $\mathbf{x}^{\prime}_{2}$,
	\begin{equation*}
	\begin{aligned}
	&\quad\:|\mathbb{MMD}^{2}(\mathbf{x}_{1},\mathbf{x}_{2}) - \mathbb{MMD}^{2}(\mathbf{x}_{1},\mathbf{x}_{2}^{\prime})\big|\\
	&=\bigg|\frac{2}{n(n-1)}\sum\limits_{i=1,i\ne \gamma}^n \bigg(g(\mathbf{x}_{2}[\gamma],\mathbf{x}_{1}[i])- g(\tilde{x},\mathbf{x}_{1}[i])\bigg)\\
	&\qquad-\frac{2}{n^2}\sum\limits_{i=1}^n \bigg(g(\mathbf{x}_{2}[\gamma],\mathbf{x}_{2}[i])-g(\tilde{x},\mathbf{x}_{2}[i])\bigg)\bigg|.
	\end{aligned}
	\end{equation*}
	Since $0\leq g(x,y)\leq \mathbb{K}$, by the triangle inequality of $L_{1}$-norm, we have
	\begin{equation*}
	\sup_{\mathbf{x}_{1}\in \mathcal{X}^{n},\mathbf{x}_{2}\in \mathcal{X}^{n},\tilde{x}\in \mathcal{X} }\big|\mathbb{MMD}^{2}(\mathbf{x}_{1},\mathbf{x}_{2}) - \mathbb{MMD}^{2}(\mathbf{x}_{1},\mathbf{x}_{2}^{\prime})\big|\leq \frac{8\mathbb{K}}{n},
	\end{equation*}
	where $\mathcal{X} = \mathbb{R}^{m}$ for any fixed $m$.
	Moreover, notice that $\mathbb{E}[\mathbb{MMD}^{2}(\mathbf{x}_{1},\mathbf{x}_{2})] = \mathbb{MMD}^{2}(p_{1},p_{2})\leq d_{L,mmd}$. Then
	\begin{equation*}
	\begin{aligned}
	&\quad\:P\bigg(\mathbb{MMD}^{2}(\mathbf{x}_{1},\mathbf{x}_{2})>d_{0}\bigg)\\
	&\leq P\big(\mathbb{MMD}^{2}(\mathbf{x}_{1},\mathbf{x}_{2}) - \mathbb{E}[\mathbb{MMD}^{2}(\mathbf{x}_{1},\mathbf{x}_{2})]> d_{0}-d_{L,mmd}\big)\\
	&\leq \exp{\bigg(-\frac{n(d_{0}-d_{L,mmd})^{2}}{64\mathbb{K}^{2}}\bigg)}.
	\end{aligned}
	\end{equation*}
	The last inequality is due to lemma \ref{lemma:mcd}.
\end{proof}
Lemma \ref{lemma:MMD1} implies that MMD satisfies \eqref{eq:dist_condition1} for $d_{th}\in (d_{L,mmd},\infty)$.

\begin{lemma}\label{lemma:KStest3}
	Suppose two distribution clusters $\mathcal{P}_{1}$ and $\mathcal{P}_{2}$ satisfy \eqref{eq:KSassumptionHT} under the KS distance. Assume that for $j=1,2$, $\mathbf{x}_{j}\sim p_{j}$ where $p_{j}\in \mathcal{P}_{j}$. Then for any $d_{0}\in (0,d_{H,ks})$,
	\begin{equation*}
	P\big(d_{KS}(\mathbf{x}_{1},\mathbf{x}_{2}) \leq d_{0}\big)\leq 4\exp{\bigg(-\frac{n(d_{H,ks}-d_{0})^{2}}{2}\bigg)}.
	\end{equation*}
\end{lemma}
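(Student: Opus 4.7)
The plan is to mirror the argument used for Lemma \ref{lemma:KStest2}, but applied to the reverse direction of the triangle inequality. The starting point is the observation that for any three distributions/empirical distributions, the KS distance (being a sup-norm on c.d.f.\ differences) satisfies the triangle inequality, so
\[
d_{KS}(p_1,p_2) \leq d_{KS}(p_1,\mathbf{x}_1) + d_{KS}(\mathbf{x}_1,\mathbf{x}_2) + d_{KS}(\mathbf{x}_2,p_2).
\]
Since $p_j \in \mathcal{P}_j$ with $k \neq k'$, the definition of $d_{H,ks}$ in \eqref{eq:cluster_setup1}–\eqref{eq:cluster_setup2} guarantees $d_{KS}(p_1,p_2) \geq d_{H,ks}$. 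Rearranging, a small empirical distance $d_{KS}(\mathbf{x}_1,\mathbf{x}_2) \leq d_0$ forces the two sampling-fluctuation terms to absorb the gap $d_{H,ks} - d_0$.

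The key step, then, is to convert this event into a statement about the deviations of empirical c.d.f.'s from their true c.d.f.'s. The implication
\[
\{d_{KS}(\mathbf{x}_1,\mathbf{x}_2)\leq d_0\} \;\subseteq\; \Big\{d_{KS}(\mathbf{x}_1,p_1) + d_{KS}(\mathbf{x}_2,p_2) \geq d_{H,ks}-d_0\Big\}
\]
together with a simple pigeonhole/union-bound argument shows that at least one of the two terms is at least $(d_{H,ks}-d_0)/2$. I would then apply the Dvoretzky–Kiefer–Wolfowitz inequality (Lemma \ref{lemma:KStest1}) to each of these events:
\[
P\!\left(d_{KS}(\mathbf{x}_j,p_j) \geq \tfrac{d_{H,ks}-d_0}{2}\right) \leq 2\exp\!\left(-\tfrac{n(d_{H,ks}-d_0)^2}{2}\right),\quad j=1,2.
\]

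Summing these two bounds gives the claimed $4\exp(-n(d_{H,ks}-d_0)^2/2)$. No step here is delicate—the only subtlety worth flagging is that $d_{H,ks}$ is an infimum over $\mathcal{P}_1\times\mathcal{P}_2$, so the inequality $d_{KS}(p_1,p_2)\geq d_{H,ks}$ is the non-strict one, which is why the bound holds for any $d_0 \in (0,d_{H,ks})$ without having to take a limit from inside the interval as was done in the proof of Lemma \ref{lemma:KStest2}. The main obstacle, if any, is simply making sure the direction of the triangle inequality and the union-bound split are set up correctly; the concentration estimate itself is immediate from DKW.
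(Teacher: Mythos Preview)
Your proposal is correct and follows essentially the same approach as the paper: the reverse triangle inequality for $d_{KS}$, a union-bound split of the resulting sum, and an application of DKW (Lemma~\ref{lemma:KStest1}) to each term. The only difference is cosmetic---the paper inserts an intermediate $d_{2}\in(d_{0},d_{H,ks})$ and passes to the limit $d_{2}\uparrow d_{H,ks}$, whereas you correctly observe that $d_{KS}(p_{1},p_{2})\geq d_{H,ks}$ already holds with the non-strict inequality, so the limiting step is unnecessary.
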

\begin{proof}
	Similar to the proof of \cref{lemma:KStest2}, we have
	\begin{equation*}
	\begin{aligned}
	&\quad\:P\big(d_{KS}(\mathbf{x}_{1},\mathbf{x}_{2})\leq d_{0}\big)\\
	& \leq P\big( - d_{KS}(\mathbf{x}_{1},p_{1}) + d_{KS}(p_{1},p_{2}) - d_{KS}(\mathbf{x}_{2},p_{2}) < d_{0}\big) \\
	& \leq P\big( - d_{KS}(\mathbf{x}_{1},p_{1}) + d_{2} - d_{KS}(\mathbf{x}_{2},p_{2}) < d_{0}\big) \\
	& \leq P\bigg(d_{KS}(\mathbf{x}_{1},p_{1}) > \frac{\hat{d}}{2}\bigg) + P\bigg(d_{KS}(\mathbf{x}_{2},p_{2}) > \frac{\hat{d}}{2}\bigg)\\
	& \leq 4\exp{\big(-\frac{n\hat{d}^{2}}{2}\big)},
	\end{aligned}
	\end{equation*}
	where $d<d_{2}<d_{H}$, $\hat{d} = d_{2} - d_{0}$ and  $\lim_{d_{2}\uparrow d_{H,ks}} = d_{H,ks} - d_{0}$. The last inequality is due to Lemma \ref{lemma:KStest1}. Therefore, by the continuity of the exponential function, we have
	\begin{equation*}
	P\big(d_{KS}(\mathbf{x}_{1},\mathbf{x}_{2}) \leq d_{0}\big)\leq 4\exp{\bigg(-\frac{n(d_{H,ks}-d_{0})^{2}}{2}\bigg)}.\qedhere
	\end{equation*}
\end{proof}
Lemma \ref{lemma:KStest3} implies that the KS distance satisfies \eqref{eq:dist_condition2} for $d_{th}\in (0,d_{H,ks})$.

\begin{lemma}\label{lemma:MMD2}
	Suppose two distribution clusters $\mathcal{P}_{1}$ and $\mathcal{P}_{2}$ satisfy \eqref{eq:KSassumptionHT} under MMD. Assume that for $j=1,2$, $\mathbf{x}_{j}\sim p_{j}$ where $p_{j}\in \mathcal{P}_{j}$. Then for any $d_{0}\in (0,d_{H,mmd})$,
	\begin{equation*}
	P\big(\mathbb{MMD}^{2}(\mathbf{x}_{1},\mathbf{x}_{2})\leq d_{0}\big) \leq \exp{\bigg(-\frac{n(d_{H,mmd}-d_{0})^{2}}{64\mathbb{K}^{2}}\bigg)}.
	\end{equation*}
\end{lemma}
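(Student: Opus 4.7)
The plan is to mirror the strategy used in Lemma \ref{lemma:MMD1}, but applied to the lower tail rather than the upper tail of $\mathbb{MMD}^{2}(\mathbf{x}_{1},\mathbf{x}_{2})$. The key observation is that, because $p_{1}\in\mathcal{P}_{1}$ and $p_{2}\in\mathcal{P}_{2}$, the definitions in \eqref{eq:cluster_setup1}--\eqref{eq:cluster_setup2} give
\begin{equation*}
\mathbb{E}\bigl[\mathbb{MMD}^{2}(\mathbf{x}_{1},\mathbf{x}_{2})\bigr] \;=\; \mathbb{MMD}^{2}(p_{1},p_{2}) \;\geq\; d_{H,mmd},
\end{equation*}
since $\mathbb{MMD}^{2}(\mathbf{x}_{1},\mathbf{x}_{2})$ is an unbiased estimator of $\mathbb{MMD}^{2}(p_{1},p_{2})$ (as noted after \eqref{eq:mmdu}). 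This lets me rewrite the event of interest as a one-sided lower-tail deviation from the mean.

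First, I would note that for any $d_{0}\in(0,d_{H,mmd})$,
\begin{equation*}
P\bigl(\mathbb{MMD}^{2}(\mathbf{x}_{1},\mathbf{x}_{2})\leq d_{0}\bigr)
\leq P\Bigl(\mathbb{E}\bigl[\mathbb{MMD}^{2}(\mathbf{x}_{1},\mathbf{x}_{2})\bigr]-\mathbb{MMD}^{2}(\mathbf{x}_{1},\mathbf{x}_{2}) \geq d_{H,mmd}-d_{0}\Bigr),
\end{equation*}
which reduces the problem to a lower-tail McDiarmid bound on the function $f(\mathbf{x}_{1},\mathbf{x}_{2})=\mathbb{MMD}^{2}(\mathbf{x}_{1},\mathbf{x}_{2})$ of the $2n$ independent samples.

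Next I would invoke Lemma \ref{lemma:mcd} applied to $-f$ (McDiarmid is symmetric in direction since negating $f$ only flips the sign of the bounded difference). The bounded-difference constants were already computed in the proof of Lemma \ref{lemma:MMD1}: perturbing any single coordinate of $\mathbf{x}_{2}$ (or, by symmetry, of $\mathbf{x}_{1}$) changes $\mathbb{MMD}^{2}(\mathbf{x}_{1},\mathbf{x}_{2})$ by at most $8\mathbb{K}/n$, using $0\leq g\leq\mathbb{K}$ and the triangle inequality. Hence $c_{i}=8\mathbb{K}/n$ for all $2n$ coordinates, giving $\sum_{i=1}^{2n}c_{i}^{2}=2n(8\mathbb{K}/n)^{2}=128\mathbb{K}^{2}/n$.

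Finally, applying McDiarmid with $\epsilon=d_{H,mmd}-d_{0}>0$ yields
\begin{equation*}
P\bigl(\mathbb{MMD}^{2}(\mathbf{x}_{1},\mathbf{x}_{2})\leq d_{0}\bigr)
\leq \exp\!\left(-\frac{2(d_{H,mmd}-d_{0})^{2}}{128\mathbb{K}^{2}/n}\right)
= \exp\!\left(-\frac{n(d_{H,mmd}-d_{0})^{2}}{64\mathbb{K}^{2}}\right),
\end{equation*}
which is exactly the claimed bound. I do not anticipate any genuine obstacle: the argument is essentially the same as Lemma \ref{lemma:MMD1} with the inequality $\mathbb{E}[\cdot]\leq d_{L,mmd}$ replaced by $\mathbb{E}[\cdot]\geq d_{H,mmd}$, and McDiarmid applied in the opposite direction. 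The only subtle point worth highlighting explicitly is that the bounded-difference constants are already known to be $8\mathbb{K}/n$ from the proof of Lemma \ref{lemma:MMD1}, so no re-derivation is needed.
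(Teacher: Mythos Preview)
Your proposal is correct and follows essentially the same approach as the paper: the paper also notes that $\mathbb{E}[\mathbb{MMD}^{2}(\mathbf{x}_{1},\mathbf{x}_{2})]=\mathbb{MMD}^{2}(p_{1},p_{2})\geq d_{H,mmd}$, rewrites the event as a lower-tail deviation, and then applies McDiarmid's inequality (Lemma \ref{lemma:mcd}) with the bounded-difference constants already established in the proof of Lemma \ref{lemma:MMD1}.
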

\begin{proof}
	Since $\mathbb{E}[\mathbb{MMD}^{2}(\mathbf{x}_{1},\mathbf{x}_{2})] = \mathbb{MMD}^{2}(p_{1},p_{2})\geq d_{H,mmd}$, similar to the proof of lemma \ref{lemma:MMD1}, we have
	\begin{equation*}
	\begin{aligned}
	&\quad\:P\big(\mathbb{MMD}^{2}(\mathbf{x}_{1},\mathbf{x}_{2})\leq d_{0}\big)\\
	&\leq P\big(\mathbb{E}[\mathbb{MMD}^{2}(\mathbf{x}_{1},\mathbf{x}_{2})] -\mathbb{MMD}^{2}(\mathbf{x}_{1},\mathbf{x}_{2})\geq d_{H,mmd}- d_{0}\big)\\
	&\leq \exp{\bigg(-\frac{n(d_{H,mmd}- d_{0})^{2}}{64\mathbb{K}^{2}}\bigg)}.
	\end{aligned}
	\end{equation*}
	The last inequality is due to lemma \ref{lemma:mcd}.
\end{proof}
Lemma \ref{lemma:MMD2} implies that MMD satisfies \eqref{eq:dist_condition1} for $d_{th}\in (0,d_{H,mmd})$.

\begin{lemma}\label{lemma:KStest4}\emph{\cite{Li2018}}
	Suppose two distribution clusters $\mathcal{P}_{1}$ and $\mathcal{P}_{2}$ satisfy \eqref{eq:KSassumptionHT} under the KS distance. Assume that for $j=1,2$, $\mathbf{x}_{j}\sim p_{j}$ where $p_{j}\in \mathcal{P}_{j}$. Then for any $\mathbf{x}_{3}\sim p_{3}$ where $p_{3}\in\mathcal{P}_{1}$,
	\begin{equation*}
	\begin{aligned}
	P\big(d_{KS}(\mathbf{x}_{1},\mathbf{x}_{3}) \geq d_{KS}(\mathbf{x}_{2},&\mathbf{x}_{3})\big)\leq 6\exp{\bigg(-\frac{n\Delta_{ks}^{2}}{8}\bigg)}.
	\end{aligned}
	\end{equation*}
\end{lemma}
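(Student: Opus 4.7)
The plan is to reduce the event $\{d_{KS}(\mathbf{x}_1,\mathbf{x}_3) \geq d_{KS}(\mathbf{x}_2,\mathbf{x}_3)\}$ to a statement about how well each empirical c.d.f.\ approximates its generating distribution, and then invoke the DKW inequality (Lemma \ref{lemma:KStest1}) with a union bound. The KS distance is an $L^\infty$-type metric between c.d.f.'s, so it obeys both the triangle inequality and the reverse triangle inequality, which is the key tool.

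First I would bound $d_{KS}(\mathbf{x}_1,\mathbf{x}_3)$ from above using the triangle inequality, introducing the true distributions $p_1,p_3$ as intermediate points:
\begin{equation*}
d_{KS}(\mathbf{x}_1,\mathbf{x}_3) \leq d_{KS}(\mathbf{x}_1,p_1) + d_{KS}(p_1,p_3) + d_{KS}(\mathbf{x}_3,p_3) \leq d_{KS}(\mathbf{x}_1,p_1) + d_{L,ks} + d_{KS}(\mathbf{x}_3,p_3),
\end{equation*}
where the last step uses $p_1,p_3\in\mathcal{P}_1$ and the definition of $d_{L,ks}$. Symmetrically, the reverse triangle inequality yields a lower bound
\begin{equation*}
d_{KS}(\mathbf{x}_2,\mathbf{x}_3) \geq d_{KS}(p_2,p_3) - d_{KS}(\mathbf{x}_2,p_2) - d_{KS}(\mathbf{x}_3,p_3) \geq d_{H,ks} - d_{KS}(\mathbf{x}_2,p_2) - d_{KS}(\mathbf{x}_3,p_3),
\end{equation*}
using $p_2\in\mathcal{P}_2$, $p_3\in\mathcal{P}_1$ and the definition of $d_{H,ks}$.

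Subtracting the second from the first, on the event $d_{KS}(\mathbf{x}_1,\mathbf{x}_3)\geq d_{KS}(\mathbf{x}_2,\mathbf{x}_3)$ I would obtain
\begin{equation*}
d_{KS}(\mathbf{x}_1,p_1) + d_{KS}(\mathbf{x}_2,p_2) + 2\,d_{KS}(\mathbf{x}_3,p_3) \;\geq\; d_{H,ks} - d_{L,ks} \;=\; \Delta_{ks}.
\end{equation*}
Since the left-hand side is a weighted sum with weights $1,1,2$ (total $4$), at least one of the three empirical-to-true KS distances must exceed $\Delta_{ks}/4$. This is the crux of the argument; the rest is just a union bound.

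Applying Lemma \ref{lemma:KStest1} to each of $d_{KS}(\mathbf{x}_j,p_j)$ for $j=1,2,3$ gives $P(d_{KS}(\mathbf{x}_j,p_j) > \Delta_{ks}/4) \leq 2\exp(-n\Delta_{ks}^2/8)$, and the union bound over three terms produces the stated factor of $6$. There is no real obstacle here: the only thing to watch is making sure the weighted sum argument is stated correctly, and that the strict versus non-strict inequalities in the events of Lemma \ref{lemma:KStest1} are handled (which is immediate by taking a vanishing slack and using continuity of the exponential, exactly as done in the proofs of Lemmas \ref{lemma:KStest2} and \ref{lemma:KStest3}).
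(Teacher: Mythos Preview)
Your argument is correct and yields exactly the stated bound: the weighted pigeonhole gives that on the event in question at least one of $d_{KS}(\mathbf{x}_j,p_j)\geq \Delta_{ks}/4$, and three applications of Lemma~\ref{lemma:KStest1} with $\epsilon=\Delta_{ks}/4$ give $3\cdot 2\exp(-2n(\Delta_{ks}/4)^2)=6\exp(-n\Delta_{ks}^2/8)$. The paper does not actually supply a proof of this lemma; it simply cites \cite{Li2018}. Your approach is precisely the natural one and mirrors the triangle/reverse-triangle plus DKW pattern used in the paper's own proofs of Lemmas~\ref{lemma:KStest2} and~\ref{lemma:KStest3}, so it is entirely in keeping with the paper's methods.
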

Lemma \ref{lemma:KStest4} implies that the KS distance satisfies \eqref{eq:dist_condition3} for $d_{th}\in (d_{L,ks},d_{H,ks})$.

\begin{lemma}\label{lemma:MMD3}
	Suppose two distribution clusters $\mathcal{P}_{1}$ and $\mathcal{P}_{2}$ satisfy \eqref{eq:KSassumptionHT} under MMD. Assume that for $j=1,2$, $\mathbf{x}_{j}\sim p_{j}$ where $p_{j}\in \mathcal{P}_{j}$. Then for any $\mathbf{x}_{3}\sim p_{3}$ where $p_{3}\in\mathcal{P}_{1}$,
	\begin{equation*}
	\begin{aligned}
	P\big(\mathbb{MMD}^{2}(\mathbf{x}_{1},\mathbf{x}_{3}) \geq \mathbb{MMD}^{2}(\mathbf{x}_{2},\mathbf{x}_{3})\big)\leq \exp{\bigg(-\frac{n\Delta_{mmd}^{2}}{96\mathbb{K}^{2}}\bigg)}.
	\end{aligned}
	\end{equation*}
\end{lemma}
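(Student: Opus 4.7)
The natural strategy is a direct application of McDiarmid's bounded-differences inequality (Lemma~\ref{lemma:mcd}) to the random function
\begin{equation*}
f(\mathbf{x}_1,\mathbf{x}_2,\mathbf{x}_3) := \mathbb{MMD}^2(\mathbf{x}_2,\mathbf{x}_3) - \mathbb{MMD}^2(\mathbf{x}_1,\mathbf{x}_3),
\end{equation*}
regarded as a function of the $3n$ mutually independent scalar entries making up $\mathbf{x}_1,\mathbf{x}_2,\mathbf{x}_3$. This parallels the proofs of Lemmas~\ref{lemma:MMD1} and~\ref{lemma:MMD2} but now with three sample blocks instead of two.

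First I would identify the mean. Since the U-statistic in \eqref{eq:mmdu} is unbiased and $p_1,p_3\in\mathcal{P}_1$ while $p_2\in\mathcal{P}_2$, the definitions in \eqref{eq:cluster_setup1}--\eqref{eq:cluster_setup2} give $\mathbb{MMD}^2(p_1,p_3)\le d_{L,mmd}$ and $\mathbb{MMD}^2(p_2,p_3)\ge d_{H,mmd}$, whence $\mathbb{E}[f]\ge\Delta_{mmd}$. In particular,
\begin{equation*}
\{\mathbb{MMD}^2(\mathbf{x}_1,\mathbf{x}_3)\ge \mathbb{MMD}^2(\mathbf{x}_2,\mathbf{x}_3)\}=\{f\le 0\}\subseteq\{\mathbb{E}[f]-f\ge \Delta_{mmd}\},
\end{equation*}
reducing the lemma to an upper-tail deviation bound on $\mathbb{E}[f]-f$.

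Next I would verify bounded differences with the uniform constant $c = 8\mathbb{K}/n$. For a perturbation of any entry of $\mathbf{x}_1$ (respectively $\mathbf{x}_2$) only one of the two MMD terms in $f$ changes, and the same bookkeeping used to obtain the bound in the proof of Lemma~\ref{lemma:MMD1} yields $|f-f'|\le 8\mathbb{K}/n$. The delicate case is a perturbation of an entry of $\mathbf{x}_3$, since $\mathbf{x}_3$ enters both MMD terms. Here I would exploit the decomposition $\mathbb{MMD}^2(\mathbf{x}_j,\mathbf{x}_3) = A(\mathbf{x}_j) + B(\mathbf{x}_3) - 2C(\mathbf{x}_j,\mathbf{x}_3)$ coming from \eqref{eq:mmdu}: the $\mathbf{x}_3$-self-term $B(\mathbf{x}_3)$ is common to both MMDs and cancels in $f$, so the $\mathbf{x}_3$-dependence reduces to $2[C(\mathbf{x}_1,\mathbf{x}_3)-C(\mathbf{x}_2,\mathbf{x}_3)]$; a direct boundedness estimate on each cross-sum (replacing the changed entry affects only $n$ summands, each of size at most $\mathbb{K}$) then bounds the $\mathbf{x}_3$-perturbation change in $f$ by the same $8\mathbb{K}/n$.

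Combining these ingredients, Lemma~\ref{lemma:mcd} applied to $3n$ independent coordinates with constant $c_i = 8\mathbb{K}/n$ gives
\begin{equation*}
P\bigl(\mathbb{E}[f]-f\ge \Delta_{mmd}\bigr)\le \exp\!\left(-\frac{2\Delta_{mmd}^2}{3n(8\mathbb{K}/n)^2}\right)=\exp\!\left(-\frac{n\Delta_{mmd}^2}{96\mathbb{K}^2}\right),
\end{equation*}
which is exactly the claimed bound. The main obstacle is precisely this $\mathbf{x}_3$-coordinate bookkeeping: a naive argument that bounds the change in each MMD separately by $8\mathbb{K}/n$ and then adds would inflate the $\mathbf{x}_3$-constant to $16\mathbb{K}/n$ and weaken the exponent. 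Recognising the cancellation of $B(\mathbf{x}_3)$ in the difference $f$ is the key step that preserves a single uniform constant across all $3n$ coordinates and produces the stated $96\mathbb{K}^2$ in the denominator of the exponent.
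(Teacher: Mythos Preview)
Your proposal is correct. The paper itself does not give a self-contained proof of this lemma; it simply refers the reader to equation~(35) of \cite{Li2018}. Your direct McDiarmid argument on $f=\mathbb{MMD}^2(\mathbf{x}_2,\mathbf{x}_3)-\mathbb{MMD}^2(\mathbf{x}_1,\mathbf{x}_3)$ is precisely the computation that sits behind that citation: the constant $96\mathbb{K}^2$ in the exponent is exactly what falls out of $3n$ coordinates with the uniform bounded-difference constant $8\mathbb{K}/n$, and your observation that the self-term $B(\mathbf{x}_3)$ cancels in $f$ is the reason the $\mathbf{x}_3$-perturbation constant does not double. One cosmetic remark: the coordinates $\mathbf{x}_j[i]$ live in $\mathbb{R}^m$ rather than $\mathbb{R}$, so ``scalar entries'' should read ``independent $\mathbb{R}^m$-valued samples''; this does not affect the argument since McDiarmid's inequality applies coordinatewise in any measurable space.
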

\begin{proof}
	See (35) in \cite{Li2018}.
\end{proof}
Lemma \ref{lemma:MMD3} implies that MMD satisfies \eqref{eq:dist_condition3} for $d_{th}\in (d_{L,mmd},d_{H,mmd})$.

\subsection{Proof of Main Results}
Define the following three events:
\begin{equation*}
\begin{aligned}
S_{1}(d_{th}) & = \big\{\exists k,k^{\prime}\in I_{1}^{K}, k\neq k^{\prime}, j\in I_{1}^{M_{k}}, j^{\prime}\in I_{1}^{M_{k^{\prime}}},\text{ s.t. }\\
&\qquad d(\mathbf{x}_{k,j},\mathbf{x}_{k^{\prime},j^{\prime}})\leq d_{th}\big\},\\
S_{2}(d_{th}) & = \big\{\exists k\in I_{1}^{K},\: j,j^{\prime}\in I_{1}^{M_{k}}\text{ s.t. }d(\mathbf{x}_{k,j},\mathbf{x}_{k,j^{\prime}})> d_{th} \big\},\\
S_{3} & = \big\{\exists k,k^{\prime}\in I_{1}^{K},\: k\neq k^{\prime},\: j_{1},j_{2}\in I_{1}^{M_{k}},\: j^{\prime}\in I_{1}^{M_{k^{\prime}}},\\
&\qquad \text{s.t. }d(\mathbf{x}_{k,j_{1}},\mathbf{x}_{k,j_{2}})\geq d(\mathbf{x}_{k,j_{1}},\mathbf{x}_{k^{\prime},j^{\prime}})\big\},
\end{aligned}
\end{equation*}
where $d_{th}\in (d_{L},d_{H})$.
Assume that the sequences $\mathbf{x}_{k,j}$'s and the corresponding distribution clusters $\mathcal{P}_{k}$'s satisfy Assumption \ref{assump}. By \eqref{eq:dist_condition2} - \eqref{eq:dist_condition3} and the union bound, we have
\begin{subequations}
	\begin{align}
	&P\big(S_{1}(d_{th})) \leq \sum_{k=1}^{K}\sum_{\substack{k^{\prime}=1\\k^{\prime}\neq k}}^{K}\sum_{j_{k}=1}^{M_{k}}\sum_{j_{k^{\prime}}=1}^{M_{k^{\prime}}}a_{1}e^{-bn} \leq M^{2}a_{1}e^{-bn},\label{eq:basis_ineq1}\\
	&P\big(S_{2}(d_{th})\big) \leq \sum_{k=1}^{K}\sum_{j_{k}=1}^{M_{k}}\sum_{j_{k^{\prime}}=1}^{M_{k^{\prime}}}a_{2}e^{-bn}\leq M^{2}a_{2}e^{-bn},\label{eq:basis_ineq2}\\
	&P\big(S_{3}\big) \leq \sum_{k=1}^{K}\sum_{j_{k}=1}^{M_{k}}\sum_{j_{k^{\prime}}=1}^{M_{k^{\prime}}}a_{3}e^{-bn}\leq M^{2}a_{3}e^{-bn}.\label{eq:basis_ineq3}
	\end{align}
\end{subequations}
The main idea of the proofs of Theorems \ref{theorem:KStest1}, \ref{theorem:KStest2} and \ref{theorem:KStest3} is to show that the error event at each iteration is a subset of $S_{1}(d_{th})\cup S_{2}(d_{th}) \cup S_{3}$.

\subsubsection{Proof of Theorem \ref{theorem:KStest1}}\label{proof:theorem:KStest1}
	The convergence of Algorithm \ref{K-means-known-C} results from the design of the algorithm. Consider the $(t-1)$-th clustering step and the $t$-th center update step. We have for $t\geq 1$,
	\begin{equation}\label{eq:pf-conv1-1}
	\begin{aligned}
	\sum_{k=1}^{K}\sum_{\mathbf{y}_{i}\in \mathcal{C}_{k}^{t-1,a}}d(\mathbf{y}_{i},\mathbf{c}_{k}^{t-1,a}) & \geq \sum_{k=1}^{K}\sum_{\mathbf{y}_{i}\in \mathcal{C}_{k}^{t-1,a}}d(\mathbf{y}_{i},\mathbf{c}_{k}^{t,a}).
	\end{aligned}
	\end{equation}
	Moreover, for the $t$-th center update and the $t$-th cluster update, we have for $t\geq 1$,
	\begin{equation}\label{eq:pf-conv1-2}
	\sum_{k=1}^{K}\sum_{\mathbf{y}_{i}\in
		\mathcal{C}_{k}^{t-1}}d(\mathbf{y}_{i},\mathbf{c}_{k}^{t,a}) \geq \sum_{l=1}^{K}\sum_{\mathbf{y}_{i}\in \mathcal{C}_{k}^{t}}d(\mathbf{y}_{i},\mathbf{c}_{k}^{t,a}).
	\end{equation}
	The equalities in \eqref{eq:pf-conv1-1} and \eqref{eq:pf-conv1-2} hold if and only if $\mathcal{C}_{k}^{t-1}=\mathcal{C}_{k}^{t}$ and $\mathbf{c}_{k}^{t-1,a} = \mathbf{c}_{k}^{t,a}$ for $k=1,\ldots,K$ respectively which implies the convergence of the algorithm. Since the sum of distances between the centers and sequences in the corresponding clusters is a non-negative value, then Algorithm \ref{K-means-known-C} converges after a finite number of iterations.\par
	Define for $t\geq 1$,
	\begin{equation*}
	\begin{aligned}
	E^{t} & = \{\text{After $t$-th iteration, there are $K_{1}$ centers generated} \\ 
	& \qquad\text{from $K_{2}$ distribution clusters} \}.
	\end{aligned}
	\end{equation*}
	where 
	\begin{equation*}
	K_{1}\begin{cases}
	>K_{2}& \text{ if } K_{2} = K,\\
	\geq K_{2} & \text{ if }K_{2}<K.
	\end{cases}
	\end{equation*}
	Similarly, define
	\begin{equation*}
	\begin{aligned}
	E^{0} & = \{\text{The center initialization obtains $K_{1}$ centers } \\ 
	& \qquad\text{generated from $K_{2}$ distribution clusters} \}.
	\end{aligned}
	\end{equation*}
	Then $E^{t}$ for $t\geq 0$ denotes the error event that centers are incorrectly chosen at the center initialization or the $t$-th center update. We first consider the error occurs at the initialization step. For Algorithm \ref{K-means-known-C},
	\begin{equation*}
	\begin{aligned}
	E^{0}& = \{\text{The center initialization results in $K$ centers gene-} \\ 
	& \qquad\text{rated from $K_2$ ($<K$) distribution clusters centers.}\}\\
	& = \{\exists k,l,l^{\prime}\in I_{1}^{K}, \:l\neq l^{\prime}\:\text{s.t. }\mathbf{c}_{l}^{0,a},\mathbf{c}_{l^{\prime}}^{0,a}\sim \mathcal{P}_{k}\}.
	\end{aligned}
	\end{equation*}
	Moreover, define
	\begin{equation*}
	\begin{aligned}
	E^{0}_{1} & =E^{0}\cap\big\{\exists l,l^{\prime}\in\{1,\ldots, K\} \text{ s.t. } d(\mathbf{c}_{l}^{0,a},\mathbf{c}_{l^{\prime}}^{0,a})\leq d_{th}\big\},\\
	E^{0}_{2} & = E^{0} \cap \big\{\exists l,l^{\prime}\in\{1,\ldots, K\} \text{ s.t. }d(\mathbf{c}_{l}^{0,a},\mathbf{c}_{l^{\prime}}^{0,a})> d_{th}\big\}.\\
	\end{aligned}
	\end{equation*}
	Then $E^{0} = E^{0}_{1}\cup E^{0}_{2}$. Without loss of generality, assume that $\mathbf{c}_{1}^{0,a},\ldots,\mathbf{c}_{K}^{0,a}$ are chosen sequentially at the center initialization step and $l<l^{\prime}$. Then $E^{0}_{1}$ implies that
	for all the sequences $\mathbf{z}\in\{\mathbf{y}_{i}\}_{i=1}^{M}\setminus \{\mathbf{c}^{0,a}_{m}\}_{m=1}^{l^{\prime}}$,
	\begin{equation*}
	\min_{m\in\{1,\ldots,l^{\prime}-1\}} d(\mathbf{c}_{m}^{0,a},\mathbf{z}) \leq d_{th}.
	\end{equation*}
	Thus, $E^{0}_{1}\subset S_{1}(d_{th})$. Then by \eqref{eq:basis_ineq1}, we have
	\begin{equation*}
	\begin{aligned}
	P\big(E^{0}_{1}\big)\leq P\big(S_{1}(d_{th})\big)\leq M^{2}a_{1}e^{-bn}.
	\end{aligned}
	\end{equation*}
	Moreover, since $E^{0}_{2}\subset S_{2}(d_{th})$, by \eqref{eq:basis_ineq2}, we have
	\begin{equation*}
	\begin{aligned}
	P\big(E^{0}_{2}\big)\leq P\big(S_{2}(d_{th})\big)\leq M^{2}a_{2}e^{-bn}.
	\end{aligned}
	\end{equation*}
	Thus, the error probability at the center initialization step is bounded as follows
	\begin{equation}\label{eq:alg2_initial}
	P\big(E^{0}\big) \leq   M^{2}(a_{1} + a_{2})e^{-bn}.
	\end{equation}
	\par
	We now consider the assignment step. Define for $t\geq 1$,
	\begin{equation*}
	\begin{aligned}
	H^{t} & = \{\text{The clustering result after the $t$-th cluster update}\\
	&\qquad \text{is incorrect}\},
	\end{aligned}
	\end{equation*}
	Moreover, define
	\begin{equation*}
	\begin{aligned}
	H^{0} & = \{\text{The clustering initialization is incorrect}\}.
	\end{aligned}
	\end{equation*}
	Since $E^{t}\subset H^{t-1}$ for $t\geq 1$, it is sufficient to obtain an upper bound on $P\big(H^{t}\big)$ which serves as the upper bound of $P(H^{t}\cup E^{t})$. Define
	\begin{equation*}
	\hat{H}^{t}_{1} = \begin{cases}
	H^{0}\setminus E^{0} & \text{ for }t=0,\\
	H^{t}\setminus \left(E^{0}\cup\left(\cup_{l=0}^{t-1}\left(H^{l}\right)\right)\right) & \text{ for }t\geq 1.
	\end{cases}
	\end{equation*}
	Then $E^{0}\cup\big(\cup_{t=1}^{T}H^{t}\big)=E^{0}\cup\big(\cup_{t=0}^{T}\hat{H}^{t}_{1}\big)$, which is the event that Algorithm \ref{K-means-known-C} makes an error before the first $T$ iterations complete. Moreover, $\hat{H}^{t}_{1}$ implies the event that an error occurs at the $t$-th cluster update step \emph{given} correct center update in the same iteration which is denoted by
	\begin{equation*}
	\begin{aligned}
	\bar{H}^{t}_{1} & =\big\{\exists k,k^{\prime},l,l^{\prime}\in I_{1}^{K},\; k\neq k^{\prime},\;j_{k}\in I_{1}^{M_{k}}\text{ s.t. }d(\mathbf{x}_{k,j_{k}},\\
	&\qquad\mathbf{c}_{l}^{t,a})\geq d(\mathbf{x}_{k,j_{k}},\mathbf{c}_{l^{\prime}}^{t,a}):\; \mathbf{c}_{l}^{t,a}\sim \mathcal{P}_{k},\: \mathbf{c}_{l^{\prime}}^{t,a}\sim\mathcal{P}_{k^{\prime}}\big\}.
	\end{aligned}
	\end{equation*}
	Then $P(\hat{H}^{t}_{1})\leq P\big(\bar{H}^{t}_{1}\big)$. Moreover, since $\bar{H}^{t}_{1}\subset S_{3}$, we have
	\begin{equation}\label{eq:alg2_iteration}
	\begin{aligned}
	P(\hat{H}^{t}_{1})\leq P(\bar{H}^{t}_{1}) \leq P\big(S_{3}\big) \leq M^{2}a_{3}e^{-bn}.
	\end{aligned}
	\end{equation}
	Therefore, by \eqref{eq:alg2_initial}, \eqref{eq:alg2_iteration} and the union bound, the error probability of Algorithm \ref{K-means-known-C} after $T$ iterations is bounded by
	\begin{equation}\label{eq:Pe_alg2}
	\begin{aligned}
	P_{e} & = P\big(E^{0}\cup \big(\cup_{t=0}^{T}\hat{H}^{t}_{1}\big)\big)\\
	&\leq M^{2}\big(a_{1} + a_{2} + (T+1)a_{3}\big)e^{-bn}.
	\end{aligned}
	\end{equation}
	\par

\subsubsection{Proof of Theorem \ref{theorem:KStest2}}\label{proof:theorem:KStest2}
	Note that the merge step may increase the sum of the KS distances between the centers and the sequences in the corresponding clusters. However, the deleting procedure in the merge step can only happen a finite number of times. Then by the same argument used in the previous proof, one can prove that Algorithm \ref{K-means-unknown-UC} converges after finite iterations.\par
	We first consider the initialization step. Define
	\begin{equation*}
	\begin{aligned}
	E_{3}^{0}&=E^{0}\cap \big\{K_{2}<K\big\},\\
	E_{4}^{0}& = E^{0}\cap \big\{K_{2}=K\big\}.
	\end{aligned}
	\end{equation*}
	Then $E^{0} = E_{3}^{0}\cup E_{4}^{0}$. Moreover, since
	\begin{equation*}
	\begin{aligned}
	E_{3}^{0} &\subset \big\{\exists k,k^{\prime}\in I_{1}^{K},\; j_{k}\in I_{1}^{M_{k}},\; j_{k^{\prime}}\in I_{1}^{M_{k^{\prime}}} \text{ s.t. }\\
	&\qquad d(\mathbf{x}_{k,j_{k}},\mathbf{x}_{k^{\prime},j_{k^{\prime}}})\leq d_{th}\big\},\\
	E_{4}^{0} &\subset \big\{\exists k\in I_{1}^{K},\; j_{k},j_{k}^{\prime}\in I_{1}^{M_{k}}, \text{ s.t. }d(\mathbf{x}_{k,j_{k}},\mathbf{x}_{k,j_{k}^{\prime}})> d_{th}\big\},
	\end{aligned}
	\end{equation*}
	then $E_{3}^{0} \subset S_{1}(d_{th})$ and  $E_{4}^{0} \subset S_{2}(d_{th})$. Thus, by \eqref{eq:basis_ineq1}, \eqref{eq:basis_ineq2}, we have
	\begin{equation*}
	\begin{aligned}
	P\big(E_{3}^{0}\big) &\leq P\big(S_{1}(d_{th})\big)\leq M^{2}a_{1}e^{-bn},\\
	P\big(E^{0}_{4}\big)& \leq P\big(S_{2}(d_{th})\big)\leq M^{2}a_{2}e^{-bn}.
	\end{aligned}
	\end{equation*}
	Therefore, by the union bound, the probability that an error occurs at the center initialization step is bounded by
	\begin{equation}\label{eq:alg4_initial}
	\begin{aligned}
	P\big(E^{0}\big) &\leq P\big(E^{0}_{3}\big) + P\big(E^{0}_{4}\big)\leq M^{2}a_{1}e^{-bn} +M^{2}a_{2}e^{-bn}.
	\end{aligned}
	\end{equation}
	\par
	We now consider the error that occurs during iterations. $E^{t}\subset H^{t-1}$ for $t\geq 1$ still holds. Furthermore, define an incorrect merge as the event that the distance between two centers generated from different distribution clusters is smaller than $d_{th}$. Let $D^{t}$ be the event that incorrect merges occur at the $t$-th ($t\geq 1 $) merge step. Thus we only need to bound $P\big(H^{t}\big)$ and $P\big( D^{t}\big)$. Let $B_{t_{1},t_{2}} =\big(\cup_{l=1}^{t_{1}}D^{l}\big)\cup\big(\cup_{l=0}^{t_{2}}H^{l}\big)$ for $t_{1}\geq 1$ and $t_{2}\geq 1$. Define
	\begin{equation*}
	\begin{aligned}
	\hat{D}^{t} &=
	\begin{cases}
	D^{1} &\; \text{for}\; t = 1\\
	D^{t}\setminus\big(E^{0}\cup B_{t-1,t-1}\big) & \;\text{for}\;t>1
	\end{cases},\\
	\end{aligned}
	\end{equation*}
	\begin{equation*}
	\begin{aligned}
	\hat{H}^{t}_{2} &=
	\begin{cases}
	H^{0} \setminus E^{0} &\; \text{for}\; t = 0\\
	H^{t}\setminus\big(E^{0}\cup B_{t,t-1}\big)&\; \text{for}\; t \geq 1
	\end{cases}.
	\end{aligned}
	\end{equation*}
	Then
	\begin{equation*}
	\begin{aligned}
	&\quad\:E^{0}\cup \big(\cup_{t=1}^{T}D^{t}\big)\cup\big(\cup_{t=0}^{T}H^{t}\big)\\
	&= E^{0}\cup\big(\cup_{t=1}^{T}\hat{D}^{t}\big)\cup\big(\cup_{t=0}^{T}\hat{H}^{t}_{1}\big),
	\end{aligned}
	\end{equation*}
	which denotes the event that an error occurs before $T$ iterations complete. Note that $\hat{D}^{t}$ implies the event that an error occurs at the $t$-th merge step \emph{given} correct center update in the same iteration, which is denoted by
	\begin{equation*}
	\begin{aligned}
	\bar{D}^{t} &= \big\{\exists k,k^{\prime}\in I_{1}^{K},\; k\neq k^{\prime},\; l\in I_{1}^{\hat{K}^{t-1}}, \text{ s.t. }d(\mathbf{c}_{l}^{t,a},\mathbf{c}_{l^{\prime}}^{t,a})\\
	&\qquad\leq  d_{th}:\;\mathbf{c}_{l}^{t,e}\sim \mathcal{P}_{k}, \mathbf{c}_{l^{\prime}}^{t,e}\sim \mathcal{P}_{k^{\prime}}\big\}.\\
	\end{aligned}
	\end{equation*}
	Then $P\big(\hat{D}^{t}\big)\leq P\big(\bar{D}^{t}\big)$ and $\bar{D}^{t}\subset S_{1}(d_{th})$. Thus, by \eqref{eq:basis_ineq1}, we have
	\begin{equation}\label{eq:alg4_iteration}
	\begin{aligned}
	P\big(\hat{D}^{t}\big)\leq P\big(\bar{D}^{t}\big)\leq P\big(S_{1}(d_{th})\big) \leq M^{2}a_{1}e^{-bn}.
	\end{aligned}
	\end{equation}
	Moreover, we have $P\big(\hat{H}_{2}^{t}\big)\leq P\big(\bar{H}_{2}^{t}\big)$, where
	\begin{equation*}
	\begin{aligned}
	\bar{H}^{t}_{2} & = \big\{\exists k,k^{\prime}\in I_{1}^{K},\; k\neq k^{\prime},\; j_{k}\in I_{1}^{M_{k}},\; l,l^{\prime}\in I_{1}^{\hat{K}^{t}},\text{ s.t.}\\ 
	&\quad d(\mathbf{x}_{k,j_{k}},\mathbf{c}_{l}^{t,e})\geq d(\mathbf{x}_{k,j_{k}},
	\mathbf{c}_{l^{\prime}}^{t,e}):\: \mathbf{c}_{l}^{t,e}\sim \mathcal{P}_{k},\:\mathbf{c}_{l^{\prime}}^{t,e}\sim \mathcal{P}_{k^{\prime}} \big\}.\\
	\end{aligned}
	\end{equation*}
	Note that $P(\bar{H}^{t}_{2})$ has the same upper bound as $P(\bar{H}^{t}_{1})$ in \eqref{eq:alg2_iteration}.
	Therefore, by \eqref{eq:alg4_initial}, \eqref{eq:alg2_iteration} and \eqref{eq:alg4_iteration}, the error probability after  $T$ iterations is bounded by
	\begin{equation}\label{eq:Pe_alg4}
	\begin{aligned}
	P_{e}&= P\big(Y^{0}\cup\big(\cup_{t=0}^{T}\hat{H}^{t}_{2}\big)\cup\big(\cup_{t=1}^{T}\hat{D}^{t}\big)\big)\\
	&\leq M^{2}\big((T+1)a_{1} + a_{2} + (T+1)a_{3}\big)e^{-bn}.
	\end{aligned}
	\end{equation}

\subsubsection{Proof of Theorem \ref{theorem:KStest3}}\label{proof:theorem:KStest3}
	Note that in the extreme case, splitting results in  each cluster containing only one sequence. Therefore, the maximum times of splitting is finite. Furthermore, if $\hat{K}$ does not change from the $(t-1)$-th to the $t$-th iteration, then $\mathcal{C}_{k}^{t-1}=\mathcal{C}_{k}^{t}$ for $k=1,\ldots,\hat{K}$. Otherwise,
	\begin{equation}
	\sum_{k=1}^{\hat{K}^{t-1}}\sum_{\mathbf{y}_{i}\in \mathcal{C}_{k}^{t-1,s}}d(\mathbf{c}_{k}^{t-1,s},\mathbf{y}_{i}) > \sum_{k=1}^{\hat{K}^{t}}\sum_{\mathbf{y}_{i}\in \mathcal{C}_{k}^{t,s}}d(\mathbf{c}_{k}^{t,s},\mathbf{y}_{i}),
	\end{equation}
	where $\mathbf{c}_{l}^{t,s}$ denotes the $l$-th center obtained at the $t$-th split step. Therefore, Algorithm \ref{K-means-unknown-spl} converges after finite iterations.\par
	Let $A^{t}$ be the event that the error occurs at the $t$-th split step. Then $A^{t}=A_{1}^{t}\cup A_{2}^{t}$, where
	\begin{equation*}
	\begin{aligned}
	A_{1}^{t} &= \big\{\text{The algorithm fails to split any cluster containing seq-} \\
	&\quad\: \text{uences generated by diffierent distribution clusters at}\\
	&\quad\: \text{the $t$-th iteration}\big\},\\
	A_{2}^{t} &= \big\{\text{The algorithm splits a cluster containing sequences} \\
	&\quad\: \text{generated by one distribution clusters at the $t$-th itera-}\\
	&\quad\: \text{tion}\big\}.\\
	\end{aligned}
	\end{equation*}
	Let $V^{t}$ denote the event that the clustering result at the $t$-th cluster update is incorrect. Then $A^{t}\cup V^{t}$ denotes the event that an error occurs at the $t$-th iteration. Define $\hat{A}^{t} = \hat{A}^{t}_{1}\cup \hat{A}^{t}_{2}$, where
	\begin{equation*}
	\begin{aligned}
	\hat{A}_{i}^{t} = \begin{cases}
	A^{1} &\;\text{for}\; t=1,\\
	A^{t}_{i}\setminus \big((\cup_{l=1}^{t-1}A^{l}) \cup (\cup_{l=1}^{t-1}V^{l})\big)&\;\text{for}\; t> 1,\\
	\end{cases}
	\end{aligned}
	\end{equation*}
	for $i=1,2$. Moreover, define
	\begin{equation*}
	\begin{aligned}
	\hat{V}^{t}= \begin{cases}
	V^{1}\setminus A^{1} &\;\text{for}\; t=1\\
	V^{t}\setminus \big((\cup_{l=1}^{t-1}V^{l})\cup (\cup_{l=1}^{t}A^{l})\big) &\;\text{for}\; t>1
	\end{cases}.
	\end{aligned}
	\end{equation*}
	Then $\big(\cup_{t=1}^{T}A^{t}\big)\cup\big(\cup_{t=1}^{T}V^{t}\big) = \big(\cup_{t=1}^{T}\hat{A}^{t}\big)\cup \big(\cup_{t=1}^{T}\hat{V}^{t}\big)$. Since $\hat{A}^{t}_{1}\subset S_{1}(d_{th})$ and $\hat{A}^{t}_{2}\subset S_{2}(d_{th})$, then we have for $t=1,\ldots,T$,
	\begin{equation*}
	\begin{aligned}
	P\big(\hat{A}^{t}_{1}\big)\leq P\big(S_{1}(d_{th})\big)\leq M^{2}a_{1}e^{-bn},\\
	P\big(\hat{A}^{t}_{2}\big)\leq P\big(S_{2}(d_{th})\big)\leq M^{2}a_{2}e^{-bn}.
	\end{aligned}
	\end{equation*}
	Moreover, since $P\big(\hat{A}^{t}\big) = P\big(\hat{A}_{1}^{t}\cup \hat{A}_{2}^{t}\big)$, by the union bound
	\begin{equation}\label{eq:alg5_spl1}
	P\big(\hat{A}^{t}\big) \leq M^{2}a_{1}e^{-bn} + M^{2}a_{2}e^{-bn}.
	\end{equation}
	Furthermore, by Definition \ref{def:correct_split}, $\hat{V}^{t}$ implies the following event
	\begin{equation*}
	\begin{aligned}
	\bar{V}^{t} &= \big\{\exists l,l^{\prime}\in I_{1}^{\hat{K}^{t}},\: k,k^{\prime}\in I_{1}^{K}\: k^{\prime}\neq k,\: j_{k}\in I_{1}^{M_{k}}\text{ s.t. }\\
	&\qquad d(\mathbf{x}_{k,j_{k}},\mathbf{c}_{l}^{t,s})\geq d(\mathbf{x}_{k,j_{k}},\mathbf{c}_{l^{\prime}}^{t,s}):\: \mathbf{c}_{l}^{t,s}\sim \mathcal{P}_{k},\\
	&\qquad\mathbf{c}_{l^{\prime}}^{t,s}\sim \mathcal{P}_{k^{\prime}} \big\}.
	\end{aligned}
	\end{equation*}
	Then, $P\big(\hat{V}^{t}\big)\leq P\big(\bar{V}^{t}\big)$ and $\bar{V}^{t}\subset S_{3}$. Thus, we have
	\begin{equation}\label{eq:alg5_spl2}
	P\big(\hat{V}^{t}\big)\leq P\big(\bar{V}^{t}\big)\leq M^{2}a_{3}e^{-bn}.
	\end{equation}
	Therefore, by \eqref{eq:alg5_spl1}, \eqref{eq:alg5_spl2} and the union bound, the error probability of Algorithm \ref{K-means-unknown-spl} after $T$ iterations is bounded by 
	\begin{equation}\label{eq:Pe_alg5}
	\begin{aligned}
	P_{e}& = P\big(\big(\cup_{t=1}^{T}\hat{A}^{t}\big)\cup\big(\cup_{t=1}^{T}\hat{V}^{t}\big)\big)\\
	&\leq M^{2}T\big(a_{1} + a_{2} + a_{3}\big)e^{-bn}.
	\end{aligned}
	\end{equation}



\ifCLASSOPTIONcaptionsoff
  \newpage
\fi



\bibliographystyle{IEEEtran}
\bibliography{reference}
\end{document}